\documentclass{article} 
\usepackage{iclr2027_conference,times}

\usepackage{amsthm}
\usepackage{amsmath,amsfonts,bm}
\usepackage{algorithm, algpseudocode}
\usepackage{hyperref}
\usepackage{aliascnt}
\usepackage{cleveref}
\usepackage{graphicx}
\usepackage{booktabs}       
\usepackage{nicefrac}       

\usepackage{tabularray}
\UseTblrLibrary{booktabs}

\def\eqref#1{equation~\ref{#1}}

\def\1{\bm{1}}

\def\eps{{\epsilon}}

\DeclareMathAlphabet{\mathsfit}{\encodingdefault}{\sfdefault}{m}{sl}
\SetMathAlphabet{\mathsfit}{bold}{\encodingdefault}{\sfdefault}{bx}{n}

\def\gO{{\mathcal{O}}}

\newcommand{\R}{\mathbb{R}}

\DeclareMathOperator*{\argmax}{arg\,max}

\DeclareMathOperator{\sign}{sign}

\usepackage{amsmath, amssymb, amsfonts, amsthm, mathtools}
\usepackage{soul}
\soulregister\cite7 

\renewcommand{\eps}{\varepsilon}
\newcommand{\parentheses}[1]{\left( #1 \right)}
\newcommand{\lara}[1]{\left\langle #1 \right\rangle}

\newtheorem{theorem}{Theorem}

\newtheorem{assumption}{Assumption}
\crefname{assumption}{assumption}{assumptions}
\Crefname{assumption}{Assumption}{Assumptions}

\newaliascnt{definition}{assumption}

\aliascntresetthe{definition}
\crefname{definition}{definition}{definition}
\Crefname{definition}{Definition}{Definitions}

\newaliascnt{lemma}{theorem}

\aliascntresetthe{lemma}
\crefname{lemma}{lemma}{lemmas}
\Crefname{lemma}{Lemma}{Lemmas}

\newtheorem{remark}[theorem]{Remark}

\definecolor{codeblue}{rgb}{0.21,0.49,0.74}
\definecolor{codegreen}{rgb}{0,0.6,0}

\usepackage{graphicx}
\usepackage[most]{tcolorbox}

\newcommand{\tacoiconjichao}{%
  \raisebox{-0.2\height}{\includegraphics[height=1.2em]{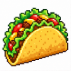}}%
}

\usepackage{mdframed} 
\newcommand{\myNum}[1]{(\emph{#1})}
\newcommand{\smartparagraph}[1]{\vspace{1pt} \noindent {\bf #1}}
\newcommand{\inLineComment}[1]{}

\expandafter\def\expandafter\normalsize\expandafter{%
    \normalsize%
    \setlength\abovedisplayskip{0pt}%
    \setlength\belowdisplayskip{4pt}%
    \setlength\abovedisplayshortskip{-1.2pt}%
    \setlength\belowdisplayshortskip{1pt}%
}
\renewcommand{\eqref}[1]{(\ref{#1})} 

\usepackage{url}
\usepackage{tabularray}

\usepackage{wrapfig}

\usepackage{tabularx}
\usepackage{multirow}
\usepackage{graphicx}
\usepackage{todonotes}

\title{\tacoiconjichao TACO:~{T}ernary {A}bsolute-max {C}olumn-wise {O}ne-sparse Optimizer for LLM Fine-Tuning}

\author{Jichao Jiang$^{1}$, Cristian McGee$^{1}$, El Houcine Bergou$^{2}$,
HanQin Cai$^{1}$ \& Aritra Dutta$^{1}$ \vspace{1mm}\\
$^{1}$University of Central Florida, 
$^{2}$Mohammed VI Polytechnic University
}

\iclrfinalcopy 
\begin{document}

\maketitle

\begin{abstract}
Full-parameter fine-tuning of large language models (LLMs) incurs substantial optimizer state memory overhead, limiting the model sizes that fit on modern GPUs.~Existing approaches either compress optimizer state, abandon first-order gradients, or change the update geometry while retaining dense state. The recently introduced Muon optimizer reduces optimizer memory through matrix-valued updates. Still, its geometry differs from AdamW and can lead to performance degradation when fine-tuning AdamW-pretrained models.~To reduce optimizer memory without sacrificing accuracy or computational efficiency in LLM fine-tuning, we propose \textbf{T}ernary \textbf{A}bsolute-max \textbf{C}olumn-wise \textbf{O}ne-sparse optimizer, or \textbf{TACO}, which follows Muon's operator-norm steepest-descent view but takes the geometric route further. TACO computes the exact steepest-descent direction under a dimension-normalized $1\!\to\!1$ operator norm by selecting the sign of the largest magnitude entry in each column of two-dimensional weight matrices.~This retains first-order gradients while making optimizer state memory nearly negligible.~Our practical TACO optimizer maintains only a small set of low precision gradient components per column, reducing persistent optimizer state by $174\times$ relative to AdamW8bit (from $27.7$~GB to $0.16$~GB) and peak training memory by $2.9\times$ (from $80.6$~GB to $27.5$~GB) on OPT-13B, while achieving comparable accuracy and runtime. TACO further enables full-parameter fine-tuning of 30-32B-parameter models on a single 80\,GB H100 GPU across multiple model families and tasks.~Our code is publicly available at:
\href{https://github.com/Jichao2357/TACO_optimizer}{\texttt{github.com/Jichao2357/TACO\_optimizer}}.
\end{abstract}


\vspace{-1mm}
\section{Introduction}
\vspace{-1mm}

Full-parameter fine-tuning adapts all model weights and remains a strong baseline for transferring large language models (LLMs) to downstream tasks~\citep{houlsby2019parameter, hu2021lora}; however, it often suffers from memory overhead. For instance, a 13B-parameter model occupies roughly 26\,GB of memory in \texttt{BF16} before allocating any training state. During AdamW \citep{AdamW} fine-tuning, the two \texttt{FP32} moment buffers add another $\sim$\,104\,GB of memory. At the same time, gradients contribute $\sim$\,26\,GB, yielding roughly 156 GB before accounting for activations, temporary workspaces, or an optional \texttt{FP32} master copy; see the \emph{Pareto frontier memory-accuracy trade-off} graph in \Cref{fig:mem_acc-intro}. Thus, at this scale, AdamW's persistent optimizer states alone require roughly $4\times$ the \texttt{BF16} parameter memory. 

Several memory-efficient first-order optimizers reduce this overhead while retaining backpropagation, including state-quantizers (AdamW8bit quantizes optimizer states~\citep{adamw8bit}; FlashAdamW combines 8-bit optimizer states with a compact master-weight representation~\citep{gonzalezblalock2026flashoptim}), factorized second moment methods (Adafactor uses factorized second-moment estimates~\citep{adafactor}), and low-rank gradient methods (GaLore projects gradients and optimizer states into a low-rank subspace~\citep{galore}, and APOLLO-Mini uses a rank-1 subspace to estimate adaptive scaling factors~\citep{zhu2025apollo}). In contrast, zeroth-order (ZO) methods avoid backpropagation and approach inference-level memory by estimating updates from forward evaluations~\citep{ STP, IS-STP, SMTP,cai2021zobcd,malladi2023finetuning,MiSTP}. Recent variants incorporate low-rank subspaces or curvature information to improve query efficiency~\citep{yu2025subzero,zhao2025second,kim2025cars}, but still require additional forward queries.

The recently introduced Muon optimizer~\citep{jordan2024muon,bernstein2024oldoptimizer,shah2025muon} asks a different question:~\emph{what steepest-descent direction under a matrix geometry makes the update useful?} Muon treats two-dimensional weights as matrices, retains a single momentum matrix, and orthogonalizes it to approximate a steepest-descent step under \emph{spectral geometry}. This reduces optimizer memory relative to AdamW, although Muon's state remains dense. Additionally, switching an AdamW-pretrained model to Muon during fine-tuning can degrade performance, a phenomenon called \emph{optimizer mismatch} that is attributed to the different update geometries and implicit biases of the two optimizers~\citep{qu2026can}.

Zeroth-Order Muon (ZO-Muon)~\citep{lang2026poweringzerothordertrainingsubspace} improves query efficiency, but as a zeroth-order method, it requires additional forward evaluations. 
Sequential evaluation for ZO methods can keep memory close to inference, but the additional function evaluations can increase computational and communication overhead, especially in distributed training or with larger batches.

Thus, existing approaches, including Muon, expose a fundamental trade-off between the memory footprint of first-order training and the optimization efficiency of standard gradients, prompting a natural question:\vspace{-2mm}
\begin{tcolorbox}[
    colback=gray!5,
    colframe=gray!75!black,
    boxrule=0.8pt,
    arc=2mm,
    left=2pt,
    right=2pt,
    top=1pt,
    bottom=1pt
]
\emph{Can full-parameter fine-tuning retain  first-order information and competitive wall-clock efficiency while making persistent optimizer-state memory nearly negligible?}\end{tcolorbox}

\begin{figure}[t]
    \centering
    \includegraphics[width=0.99\linewidth]{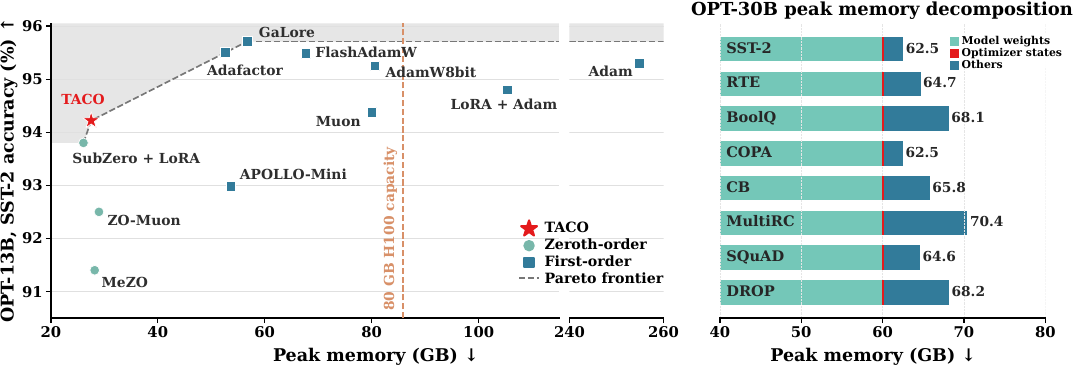}
    \vspace{-1mm}
    \caption{\small{
    \textbf{Performance–memory overview of TACO and other optimizers.} \emph{Left:} Pareto frontier memory-accuracy trade-off for fine-tuning OPT-13B on the SST-2 dataset on an 80 GB H100 GPU. TACO lies at the low-memory end of the Pareto frontier, achieving 94.22\% accuracy with 27.54\,GB of peak training memory. The dashed line connects the Pareto-optimal methods. The vertical line marks the 80\,GB H100 memory limit. Adam achieves 95.3\% accuracy with 254.8\,GB of peak memory. \emph{Right:} Peak GPU memory decomposition of TACO on OPT-30B across 8 downstream tasks. The bars show model state, optimizer state, and other transient memory across all eight downstream tasks. TACO's optimizer state occupies only 0.267\,GB, less than 0.5\% of peak memory on each task, while the transient memory accounts for the variation across tasks.}
    }\label{fig:mem_acc-intro}
    \vspace{-4mm}
\end{figure}

To answer this, we propose \textbf{TACO}, a \textbf{T}ernary \textbf{A}bsolute-max \textbf{C}olumn-wise \textbf{O}ne-sparse optimizer. 
Using the same operator-norm steepest-descent viewpoint as Muon,
\emph{TACO is not just another memory-efficient optimizer}; TACO's conceptual centerpiece is an optimization geometry whose exact steepest-descent direction has extreme structural sparsity. Under a dimension-normalized $1\!\to\!1$ operator norm, the corresponding dual geometry makes the steepest-descent direction separable across columns, and each column selects the sign of its largest-magnitude gradient coordinate. Thus, an $m\times n$ update contains at most $n$ nonzeros.


Although the TACO steepest-descent direction is one-sparse per column, minibatch gradients can make the selected coordinates unstable across training. A conventional dense gradient history can stabilize coordinate selection but erodes the memory advantage. TACO instead retains only a small, low-precision set of dynamic gradient heavy hitters per column. This sparse gradient history keeps persistent optimizer-state memory negligible even as model size grows.
For example, TACO can fine-tune OPT-30B on a single 80\,GB H100 GPU, with peak GPU memory of 62.5\,GB on SST-2, 64.7\,GB on RTE, and 68.1\,GB on BoolQ; see Peak GPU memory decomposition in Figure~\ref{fig:mem_acc-intro}. Adam does not fit this model on a single H100 80\,GB GPU. Across our study, TACO enables full-parameter fine-tuning of models up to 32B parameters on a single 80\,GB H100 and transfers across multiple model families and tasks.

Taken together, our main contributions are:

\myNum{i} \smartparagraph{Geometry-induced sparse optimization.} We propose TACO, a new first-order optimizer for memory-efficient full-parameter LLM fine-tuning. TACO is derived from the exact steepest-descent solution under a dimension-normalized $1\!\to\!1$ operator norm, whose geometry naturally yields a ternary, column-wise Top-1 update. Thus, TACO's sparsity is intrinsic to its optimization geometry rather than imposed by post-hoc gradient sparsification; see \Cref{sec:TACO}.

\myNum{ii} \smartparagraph{Convergence and fine-tuning theory.} We prove that vanilla TACO reaches an $\varepsilon$-stationary point in $\mathcal{O}(\varepsilon^{-2})$ iterations under standard smoothness assumptions. We further show that, in our theoretical setting, TACO shares the continuation geometry and limiting solution of an Adam proxy while differing from Muon, and satisfies standard $\mu$P spectral scaling under balanced winner counts. These results connect TACO to optimizer mismatch and the learning rate transfer problem; see \Cref{sec:TACO,sec:geometry}.

\myNum{iii} \smartparagraph{TACO-specific memory-efficient design.} TACO does not require an accurate dense model of the gradient history, but only enough information to recover the dominant coordinate in each column. Exploiting this structure, we develop a practical implementation based on dynamic \texttt{FP8} sparse gradient histories and immediate gradient processing, reducing persistent state from $\mathcal{O}(mn)$ to $\mathcal{O}(n)$ while retaining first-order gradients and the column-wise Top-1 update; see \Cref{sec:practical implementation}.

\myNum{iv} \smartparagraph{Empirical validation at scale.}
We evaluate TACO across model scales, architectures, and downstream tasks. TACO achieves competitive task performance with substantially reduced training memory, enables full-parameter fine-tuning of OPT-30B and Qwen3-32B on a single 80\,GB H100 GPU, and transfers across OPT, Qwen3, Pythia, Llama, and Mistral model families. Ablations further characterize the roles of the gradient history EMA decay factor, heavy hitter threshold, update sparsity, and optimizer step budget; see \Cref{sec:experiment}.

\vspace{-2mm}
\section{Related Work} 
\vspace{-2mm}

\smartparagraph{Zeroth-order optimization} has become a practical way to reduce the memory cost of LLM fine-tuning. These methods construct an update direction from the change in loss under a parameter perturbation. MeZO applies this idea by regenerating its perturbations in place using a seeding trick~\citep{malladi2023finetuning}. Sparse MeZO reduces the number of perturbed parameters, preserving sparsity~\citep{liu2025sparse}. ZO-Muon projects the estimated gradient into a lower-dimensional subspace and applies polar orthogonalization to recover more useful structure from the noisy estimate~\citep{lang2026poweringzerothordertrainingsubspace}. HiZOO instead incorporates diagonal Hessian information into zeroth-order
optimization, using curvature-aware preconditioning to improve convergence~\citep{zhao2025second}.


\smartparagraph{Memory-efficient first-order optimization} methods retain backpropagation while reducing parameter-associated training state. Adafactor represents second moments using row and column factors~\citep{adafactor}, while CAME adds a confidence-guided correction to improve the stability of this representation~\citep{came}. Alternatively, 8-bit Adam uses block-wise quantization to store optimizer statistics at lower precision~\citep{adamw8bit}. FlashOptim combines quantized optimizer states with a compact master-weight representation and provides FlashAdamW as a drop-in AdamW variant~\citep{gonzalezblalock2026flashoptim}. GaLore projects gradients and optimizer states into a lower-dimensional space~\citep{galore}, whereas APOLLO approximates adaptive gradient scaling through a random low-rank auxiliary space; its APOLLO-Mini variant reduces this space to rank one and applies tensor-wise scaling~\citep{zhu2025apollo}.

\smartparagraph{Operator-norm and linear-minimization-oracle (LMO) optimization.}
\cite{bernstein2024oldoptimizer} formulate matrix optimization as steepest-descent under
operator norms and characterize the $1\!\to\!p$ family through maximum column
norms. \cite{pethick2025normlmo} develop a stochastic
LMO framework and instantiate it with layer-aware
operator geometries in Scion. 

\smartparagraph{Sign-based methods.} In gradient compression, TernGrad maps gradient values to \(\{-1,0,+1\}\) to reduce the communication bottleneck in distributed training~\citep{wen2017terngrad}, while signSGD retains only the coordinate-wise gradient signs~\citep{signsgd}; see GRACE~\citep{xu2021grace}, a compressed communication framework that places these methods under one umbrella. SignMuon~\citep{signmuon} communicates the signs of locally constructed Muon directions. 



\vspace{-1mm}
\section{\protect\tacoiconjichao TACO:~\textbf{T}ernary \textbf{A}bsolute-max \textbf{C}olumn-wise \textbf{O}ne-sparse Optimizer}
\label{sec:TACO}
\vspace{-1mm}

\smartparagraph{Setup and notation.}  
We consider the minimization problem 
\begin{equation}
\label{eqn:minimization_problem}
    \min_{W\in\R^{m\times n}}F(W).
\end{equation}
Let $G=\nabla F(W)$. We use $d_j$ and $g_j$ to denote the $j^{\rm th}$ column of matrices $D$ and $G$, respectively, and define $\lara{G,D}_F=\operatorname{tr}(G^\top D)$. For a vector $x\in\R^d$ and $r\in [1,\infty]$, we define the dimension-normalized $r$-norm as $\|x\|_{r,d}=d^{-1/r}\|x\|_r$, with the convention $d^{-1/\infty} = 1$. For a linear map $D:\R^n\to\R^m$, we define the induced operator norm,
\(
\|D\|_{(p,n\to q,m)}=\sup_{x\ne0}\|Dx\|_{q,m}/\|x\|_{p,n}.
\)
Here, the domain $\R^n$ is equipped with $\|\cdot\|_{p, n}$ and the codomain $\R^m$ with $\|\cdot\|_{q,m}$.
The indices in $(p,n\to q,m)$ record the norm orders and dimensions of the input and output spaces; here $n$ and $m$ determine normalization factors, not additional tensor indices. We use $\odot$ to denote the element-wise Hadamard product and, with a slight abuse of notation, let $\operatorname{sign}(\cdot)$ act element-wise on vectors and matrices. For $q\in[1,\infty]$, we write $\|A\|_{1\to q}=\max_{\|x\|_1\leq 1}\|Ax\|_q$.

\smartparagraph{From Muon to TACO.} A first-order Taylor approximation of \eqref{eqn:minimization_problem} gives
\begin{equation*}
    F(W + \eta D) = F(W) + \eta\lara{G, D}_F + \gO(\eta^2).
\end{equation*}
From this steepest-descent viewpoint, Muon solves the local minimization problem
\begin{equation}
\label{Muonlocal_minimization_problem}
    \min_D \lara{G,D}_F,\quad \|D\|_{(2, n\to 2, m)}\le 1,
\end{equation}
where the geometry used to constrain $D$ determines the steepest local descent direction. 
In this view, Muon uses the normalized $2\!\to\!2$ (spectral) operator norm; if $G$ has the SVD decomposition  $G=U\Sigma V^\top$, then  $D$ is proportional to $UV^\top$. 

TACO makes a different geometric choice. Since the $\ell_1$-norm encourages sparsity, we equip both spaces with normalized $\ell_1$-norms, for which
\begin{equation*}
    \|D\|_{(1,n\to 1,m)} = \frac{n}{m} \|D\|_{1\to 1} = \frac{n}{m} \max_j \|d_j\|_1,
\end{equation*}
and we reformulate \eqref{Muonlocal_minimization_problem} as
\begin{tcolorbox}[
   colback=gray!5,
    colframe=gray!75!black,
    boxrule=0.8pt,
    arc=2mm,
    left=2pt,
    right=2pt,
    top=1pt,
    bottom=1pt
]
\begin{equation}
\label{TACO:eqn:local_minimization_problem}
\min_D \lara{G, D}_F,\quad \|D\|_{(1,n\to 1,m)}\le 1.
\end{equation}
\end{tcolorbox}
The maximum-column identity is the $p=1$ specialization of the general
$1\!\to\!p$ characterization; TACO's
specific choice is to use this geometry as the basis of a column-wise
one-sparse, low-state optimizer.
Thus, $1\!\to\!1$ is not a global element-wise $\ell_1$ constraint. It gives every column its own $\ell_1$ budget. Its dual norm under the Frobenius pairing is
\begin{equation*}
    \|G\|_{(1,n \to 1,m)^*} = \frac{m}{n}\sum_j \|g_j\|_\infty.
\end{equation*}
Because the constraint separates across columns, each column spends its budget on an extreme point of an $\ell_1$ ball, its largest-magnitude gradient coordinate. Let $i_j\in \argmax_i |G_{ij}|, j\in [n]$, with any fixed rule for breaking ties, define the operator $\mathcal{S}: \R^{m\times n} \to \R^{m\times n}$ as
\begin{equation}
\label{TACO:eqn:TACO_operator}
    \mathcal{S}(G)_{ij}:= \begin{cases}
        \sign(G_{ij}), &i = i_j,\\
        0, & \text{otherwise}.
    \end{cases}
\end{equation}
Therefore, an exact minimizer of \eqref{TACO:eqn:local_minimization_problem} is
\begin{equation*}
    D^\star = -\frac{m}{n} \mathcal{S}(G).
\end{equation*}
TACO's geometry produces a \textbf{ternary, column-wise, and one-sparse} update; an $m\times n$ update has at most $n$ nonzeros.~
The factor \(m/n\) is not an additional layerwise scaling hyperparameter; it follows directly from dimension normalization.
This distinguishes TACO from Top-1 sparsification in conventional optimization \citep{sahu2021rethinking, dutta2020discrepancy}. TACO's sparsity follows from the \emph{exact closed-form solution of the local problem.}
\begin{tcolorbox}[
    colback=gray!5,
    colframe=gray!75!black,
    boxrule=0.8pt,
    arc=2mm,
    left=2pt,
    right=2pt,
    top=1pt,
    bottom=1pt
]
Given $W_t$ and $G_t=\nabla F(W_t)$, vanilla TACO computes $D_t=\frac{m}{n}\mathcal{S}(G_t)$ and updates the sequence $\{W_t\}_{t=0}^T$ as
\begin{equation}\label{eq:vanilla TACO update}
    W_{t+1}=W_t-\eta_t D_t.
\end{equation}
\end{tcolorbox}
The term vanilla distinguishes the history-free TACO iteration from the noise-robust practical implementation in \Cref{sec:practical implementation}.

\smartparagraph{Convergence of Vanilla TACO.}
We state the convergence guarantee of vanilla TACO using the same primal-dual norm pair that defines the update.
\begin{assumption}{(Lower boundedness and smoothness)}\label{TACO:L-smooth_assumption}
The objective function is lower bounded and $L$-smooth. That is, $F_{\star}:=\inf_W F(W)>-\infty$, and for all $W_1,W_2\in\R^{m\times n}$, the following holds: 
    $\| \nabla F(W_1) - \nabla F(W_2)\|_{(1,n\to 1,m)^*} \le L \|W_1 - W_2\|_{(1,n \to 1,m)}.$
\end{assumption}

\begin{theorem}{(Nonconvex convergence of vanilla TACO)}\label{thm:taco-convergence}
Let \Cref{TACO:L-smooth_assumption} hold and define $\Delta_F:= F(W_0)-F_{\star}$. The iterates $\{W_t\}_{t=0}^{T}$ in \eqref{eq:vanilla TACO update} with step sizes $\eta_t>0$ satisfy
\begin{equation*}
    \min_{0\le t<T}\|\nabla F(W_t)\|_{(1,n\to1,m)^*}
    \le
    \frac{\Delta_F+\frac{L}{2}\sum_{t=0}^{T-1}\eta_t^2}
         {\sum_{t=0}^{T-1}\eta_t}.
\end{equation*}
In particular, with $\eta=\sqrt{\tfrac{2\Delta_F}{LT}}$, we have
    $
        \min_{0\le t<T} \|\nabla F(W_t)\|_{(1,n \to 1,m)^*}\le \sqrt{\frac{2L\Delta_F}{T}}.
    $
\end{theorem}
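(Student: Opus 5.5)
The plan is the standard steepest-descent descent-lemma argument, carried out in the primal-dual norm pair $\|\cdot\|_{(1,n\to1,m)}$ and $\|\cdot\|_{(1,n\to1,m)^*}$. Write $\|\cdot\|$ and $\|\cdot\|_*$ for this pair.

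\textbf{Step 1: descent lemma.} From \Cref{TACO:L-smooth_assumption}, I will derive
\[
F(W')\le F(W)+\lara{\nabla F(W),W'-W}_F+\tfrac{L}{2}\|W'-W\|^2 .
\]
The argument writes $F(W')-F(W)=\int_0^1\lara{\nabla F(W+s(W'-W)),W'-W}_F\,ds$ and subtracts $\lara{\nabla F(W),W'-W}_F$. The remainder is then controlled by the generalized Cauchy--Schwarz inequality $|\lara{A,B}_F|\le\|A\|_*\|B\|$, which holds because $\|\cdot\|_*$ is the Frobenius dual of $\|\cdot\|$. The assumed Lipschitz bound then gives $L s\|W'-W\|^2$ under the integral, and integrating yields the factor $L/2$.

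\textbf{Step 2: the update direction.} For $D_t=\frac{m}{n}\mathcal{S}(G_t)$ I check two facts.
\begin{itemize}
\item Its norm is one. Each column of $\mathcal{S}(G_t)$ has $\ell_1$ norm at most $1$, with equality unless the column of $G_t$ is zero. Hence $\|D_t\|=\frac{n}{m}\cdot\frac{m}{n}\max_j\|d_j\|_1\le 1$.
\item It attains the dual norm. Since $\sum_j \sign(G_{i_jj})G_{i_jj}=\sum_j\|g_j\|_\infty$, we get $\lara{G_t,D_t}_F=\frac{m}{n}\sum_j\|g_j\|_\infty=\|G_t\|_*$.
\end{itemize}
This is exactly the optimality of $D^\star$ already noted in the text, so I can invoke it directly.

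\textbf{Step 3: one-step decrease and telescoping.} Plugging $W'=W_t-\eta_tD_t$ into Step 1 gives
\[
F(W_{t+1})\le F(W_t)-\eta_t\|\nabla F(W_t)\|_*+\tfrac{L}{2}\eta_t^2 .
\]
Summing over $t=0,\dots,T-1$ and using $F(W_T)\ge F_\star$ yields
\[
\sum_t\eta_t\|\nabla F(W_t)\|_*\le\Delta_F+\tfrac L2\sum_t\eta_t^2 .
\]
Lower-bounding the left side by $\min_t\|\nabla F(W_t)\|_*\sum_t\eta_t$ and dividing by $\sum_t\eta_t$ gives the first claim.

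\textbf{Step 4: constant step size.} With $\eta_t=\eta$ the bound becomes $\Delta_F/(T\eta)+L\eta/2$. The choice $\eta=\sqrt{2\Delta_F/(LT)}$ balances the two terms, each equal to $\sqrt{L\Delta_F/(2T)}$, and their sum is $\sqrt{2L\Delta_F/T}$.

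\textbf{Main obstacle.} The only delicate point is Step 1. One must verify that the stated dual expression $\frac{m}{n}\sum_j\|g_j\|_\infty$ really is the Frobenius dual of $\frac nm\max_j\|d_j\|_1$, so that the pairing inequality used in the integral bound is valid. This follows by maximizing $\lara{G,D}_F$ column-wise: each column $d_j$ with $\|d_j\|_1\le m/n$ contributes at most $\frac mn\|g_j\|_\infty$. That is the same computation as in Step 2, so no new difficulty arises.
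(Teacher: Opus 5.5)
Your proposal is correct and follows essentially the same argument as the paper. Both use the descent inequality in the $(1,n\to1,m)$ primal--dual norm pair, the facts $\|D_t\|_{(1,n\to1,m)}\le 1$ and $\lara{G_t,D_t}_F=\|G_t\|_{(1,n\to1,m)^*}$, telescoping with $F\ge F_\star$, and the balanced constant step size; the only difference is that you explicitly derive the descent lemma and the dual-norm pairing inequality, which the paper simply invokes from \Cref{TACO:L-smooth_assumption}.
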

We call $W$ an $\eps$-stationary point in the TACO geometry if $\|\nabla F(W)\|_{(1,n \to 1, m)^*} \le \eps$. \Cref{thm:taco-convergence} shows that vanilla TACO reaches an $\eps$-stationary point in $\mathcal{O}(\eps^{-2})$ iterations.



\section{Continuation Geometry, Implicit Bias, and $\mu$P-scaling of TACO}\label{sec:geometry}





Recent work shows that Adam and Muon can exhibit different fine-tuning behavior because their update geometries induce different implicit biases \citep{qu2026can}. We study where TACO lies relative to these geometries in a simplified fixed-input setting, and then analyze how the spectral norm of its update scales with matrix dimensions.



Let $z\in\mathbb{R}^n, z\neq 0$, denote a fixed input. Ties in TACO's column-wise selections are resolved using a fixed deterministic rule. For $W_0 \in \mathbb{R}^{m\times n}$ and a vector $w \in \mathbb{R}^n$, let the continuation set generated by $w$ at $W_0$ be represented by $\mathcal{C}(W_0,w):=\{W_0+aw^\top : a \in \mathbb{R}^m\}$.

We include the Adam and Muon proxies in \eqref{eq:adam-proxy} and~\eqref{eq:muon-proxy}, respectively; see \Cref{sec:theory-extension}. To make the optimizer comparison precise, we consider a setting in which the objective depends on a weight matrix only through its action on a fixed input and ask how different update geometries move through parameter space while producing the same target output. 

\begin{assumption}\label{ass:coherent}{(Coherent objective)} The objective depends on $W$ through the layer output, i.e., $F(W)=\phi(Wz)$ for a differentiable function $\phi:\mathbb{R}^m\to\mathbb{R}$.
\end{assumption}


\begin{assumption}\label{ass:attained}
{(Common target output)}
Starting from the same $W_0$, all compared methods converge to limits $W^\star$ satisfying
$
W^\star z=u^\star
$
for the same optimal output $u^\star$.
\end{assumption}

With these conditions in place, we now ask how each optimizer moves through parameter space. We first show that, despite very different update rules, TACO and Adam follow the same continuation geometry and converge to the same parameter-space solution. We then contrast this behavior with Muon, whose spectral geometry induces a different continuation path.

\begin{theorem}({Adam and TACO continuation equivalence})\label{thm:adam-taco}
Let \Cref{ass:coherent,ass:attained} hold, and let the Adam Proxy follow \eqref{eq:adam-proxy}. Then both the TACO and Adam sequences lie in $\mathcal{C}(W_0,\operatorname{sign}(z))$ for all $t\geq 0$, and
$W_{\rm TACO}^\star=W_{\rm Adam}^\star=W_0+\frac{(u^\star-W_0z)\operatorname{sign}(z)^\top}{\|z\|_1}.$
\end{theorem}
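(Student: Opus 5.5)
Under \Cref{ass:coherent}, the gradient has rank one: $\nabla F(W)=\nabla\phi(Wz)\,z^\top=:g\,z^\top$, so its $(i,j)$ entry is $g_i z_j$. The plan is to show that both updates are outer products $a_t\operatorname{sign}(z)^\top$ with some $a_t\in\R^m$. Then induction on $t$ gives membership in $\mathcal{C}(W_0,\operatorname{sign}(z))$. Finally, \Cref{ass:attained} pins down the limit.

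\textbf{Step 1 (TACO).} In column $j$ the entries are $g_i z_j$. The magnitude $|g_i z_j|=|g_i||z_j|$ is maximized over $i$ at any $i\in\argmax_i|g_i|$. Because the deterministic tie-breaking rule acts on the same magnitude profile $|g|$ (scaled by $|z_j|$), we can take the same winner $i^\ast$ in every column with $z_j\neq0$.
\begin{itemize}
\item If $z_j\ne 0$, then $\operatorname{sign}(g_{i^\ast}z_j)=\operatorname{sign}(g_{i^\ast})\operatorname{sign}(z_j)$.
\item If $z_j=0$, the column is identically zero, so $\mathcal{S}(G)_{:,j}=0=\operatorname{sign}(z_j)\cdot(\text{anything})$.
\end{itemize}
Hence $\mathcal{S}(G_t)=\operatorname{sign}(g_{i^\ast})e_{i^\ast}\operatorname{sign}(z)^\top$. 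If $g=0$, the update is $0$, which is trivially in the span. So $W_{t+1}-W_t=-\eta_t\frac{m}{n}\operatorname{sign}(g_{i^\ast})e_{i^\ast}\operatorname{sign}(z)^\top$. Summing over steps, $W_t=W_0+a_t\operatorname{sign}(z)^\top$.

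\textbf{Step 2 (Adam proxy).} Here I use the form of \eqref{eq:adam-proxy}, which I expect to be a sign-type update $-\eta_t\operatorname{sign}(M_t)$ with momentum $M_t$, possibly a limiting $\beta_2\to$ signSGD proxy. By induction, each $M_t$ is a combination of past gradients, so $M_t=b_t z^\top$ with $b_t\in\R^m$. Its entrywise sign is then $\operatorname{sign}(b_t)\operatorname{sign}(z)^\top$, again an outer product with $\operatorname{sign}(z)$. If the proxy instead uses $M_t/\sqrt{V_t}$ with $V_t$ built from $(g_s z^\top)^{\odot2}=(g_s^{\odot 2})(z^{\odot2})^\top$, I would check entrywise that the ratio factorizes as $(\cdot)_i\,z_j/|z_j|$. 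The only care needed is that the $\epsilon$ term and the $z_j=0$ columns (where the update vanishes, consistent with $\operatorname{sign}(0)=0$) do not break this. I expect this to be the main obstacle: verifying that the specific proxy really yields only $\operatorname{sign}(z_j)$ dependence along columns, with no residual $|z_j|$ scaling.

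\textbf{Step 3 (limit).} Since $\mathcal{C}(W_0,\operatorname{sign}(z))$ is an affine, hence closed, set, the limits $W^\star_{\rm TACO}$ and $W^\star_{\rm Adam}$ lie in it, so $W^\star=W_0+a\operatorname{sign}(z)^\top$ for some $a\in\R^m$. Then
\[
W^\star z=W_0z+a\,\operatorname{sign}(z)^\top z=W_0z+a\|z\|_1 .
\]
By \Cref{ass:attained}, $W^\star z=u^\star$. Since $\|z\|_1>0$ because $z\ne0$, this forces $a=(u^\star-W_0z)/\|z\|_1$, which gives the stated common formula.
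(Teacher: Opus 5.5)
Your skeleton matches the paper's: the rank-one gradient $G_t=g_tz^\top$, every update of the form $a\operatorname{sign}(z)^\top$, the affine set being closed, and solving $W_0z+a\|z\|_1=u^\star$. Your Step 3 is exactly the paper's ending. The gap is in Step 1, which analyzes a narrower TACO than the paper's proof does.

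\textbf{TACO with history.} The paper's proof treats the history-based iteration of Algorithm~\ref{TACO:alg:TACO_optimizer}: $M_t=\mathcal{H}_k(\mu M_{t-1}+(1-\mu)G_t)$ with $M_{-1}=0$ and $D_t=\frac{m}{n}\mathcal{S}(M_t)$. Your Step 1 handles only the history-free update $\mathcal{S}(G_t)$, i.e.\ the case $\mu=0$. The missing ingredient is an induction showing that the sparse history itself stays rank one with right factor $z$:
\begin{itemize}
\item If $M_{t-1}=p_{t-1}z^\top$, then $\mu M_{t-1}+(1-\mu)G_t=\tilde p_tz^\top$ with $\tilde p_t=\mu p_{t-1}+(1-\mu)g_t$.
\item Column $j$ then has magnitudes $|z_j||\tilde p_{t,i}|$, and their ordering in $i$ is the same for every $j$ with $z_j\neq0$.
\item With a column-independent tie-break, $\mathcal{H}_k$ retains the same row set $\mathcal{I}_t$ in every nonzero column, and zero columns stay zero. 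The paper orders equal magnitudes by row index; your ``same winner'' claim silently needs this too.
\item Hence $M_t=p_tz^\top$, where $p_t$ agrees with $\tilde p_t$ on $\mathcal{I}_t$ and vanishes elsewhere.
\end{itemize}
Your Step 1 argument then applies verbatim with $g$ replaced by $p_t$, giving $\mathcal{S}(M_t)=\operatorname{sign}(p_{t,i_t^\star})e_{i_t^\star}\operatorname{sign}(z)^\top$.

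\textbf{Adam proxy.} Your Step 2 was left conditional, but \eqref{eq:adam-proxy} is bias-corrected Adam with no $\epsilon$, so your second branch is the relevant one and it closes directly. By induction $M_t=p_tz^\top$ and $V_t=q_t(z\odot z)^\top$. The bias corrections are scalars, so the $(i,j)$ entry of $\hat M_t/\sqrt{\hat V_t}$ is $c_t(p_{t,i}/\sqrt{q_{t,i}})\operatorname{sign}(z_j)$ with $c_t=\sqrt{1-\beta_2^{t+1}}/(1-\beta_1^{t+1})$. This is precisely the paper's computation. Your concern about $\epsilon$ is well founded: with $\epsilon>0$ the entry would be $p_{t,i}z_j/(\sqrt{q_{t,i}}|z_j|+\epsilon)$, and the leftover $|z_j|$ dependence would break the claim. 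It simply does not arise for this proxy.
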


\begin{theorem}({Muon continuation separation})\label{thm:muon-sep}
Let \Cref{ass:coherent,ass:attained} hold and let the Muon proxy follow \eqref{eq:muon-proxy}. Further let $r=u^\star-W_0z$. Then Muon’s iterates lie in $\mathcal{C}(W_0,z)$ and
$W_{\rm Muon}^\star=W_0+\frac{rz^\top}{\|z\|_2^2}, W_{\rm Muon}^\star-W_{\rm TACO}^\star=r\left(\frac{z}{\|z\|_2^2}-\frac{\operatorname{sign}(z)}{\|z\|_1}\right)^\top.$
\end{theorem}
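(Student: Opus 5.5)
We need to prove the Muon separation statement. The Muon proxy (eq:muon-proxy) is not shown; presumably it orthogonalizes the gradient. Under the coherent objective, $\nabla F(W)=\nabla\phi(Wz)z^\top$, which has rank one. Its polar factor is $\frac{g}{\|g\|_2}\frac{z^\top}{\|z\|_2}$ with $g=\nabla\phi(Wz)$. Any momentum that accumulates such gradients stays of the form $az^\top$ for some $a\in\R^m$.

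The proof proceeds in three steps.

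\textbf{Step 1: invariance of the continuation set.} We argue by induction on $t$ that $W_t\in\mathcal{C}(W_0,z)$. Assume $W_t=W_0+a_tz^\top$. Then the gradient is $\nabla F(W_t)=g_tz^\top$ with $g_t=\nabla\phi(W_tz)$. If the proxy carries a momentum $M_t$, we also assume inductively that $M_t=b_tz^\top$. Then $M_{t+1}=\beta M_t+(1-\beta)g_tz^\top$ is again of the form $b_{t+1}z^\top$. Orthogonalizing a rank-one matrix $bz^\top$ gives $\frac{b}{\|b\|_2}\frac{z^\top}{\|z\|_2}$, or $0$ if $b=0$ under the convention the proxy uses. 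Hence the update $W_{t+1}=W_t-\eta_t\cdot(\text{scale})\cdot\mathrm{orth}(M_{t+1})$ stays in $W_0+\R^m z^\top$. If the proxy uses Newton--Schulz instead of the exact polar factor, the same argument works: every odd matrix polynomial in $X$ maps $bz^\top$ to a scalar multiple of $bz^\top$, because $(bz^\top)(zb^\top)(bz^\top)=\|b\|_2^2\|z\|_2^2\, bz^\top$. Whether the proxy is the exact or the Newton--Schulz variant is the main thing to check against \eqref{eq:muon-proxy}. Everything else is linear algebra.

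\textbf{Step 2: identify the limit.} The set $\mathcal{C}(W_0,z)$ is closed, being an affine subspace. By \Cref{ass:attained} the Muon limit exists, so $W^\star_{\rm Muon}=W_0+a^\star z^\top$ for some $a^\star\in\R^m$. The same assumption gives $W^\star_{\rm Muon}z=u^\star$, i.e. $W_0z+a^\star\|z\|_2^2=u^\star$. Since $z\neq0$, this forces $a^\star=r/\|z\|_2^2$ with $r=u^\star-W_0z$, which yields the stated formula for $W^\star_{\rm Muon}$.

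\textbf{Step 3: the difference.} By \Cref{thm:adam-taco}, $W^\star_{\rm TACO}=W_0+r\,\operatorname{sign}(z)^\top/\|z\|_1$. Subtracting this from the Muon limit gives $W^\star_{\rm Muon}-W^\star_{\rm TACO}=r\left(\frac{z}{\|z\|_2^2}-\frac{\operatorname{sign}(z)}{\|z\|_1}\right)^\top$ directly.

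The only potential obstacle is the degenerate case in Step 1, where the accumulated momentum vanishes. There the orthogonalization must be interpreted, and we take it to return $0$ or a $z$-aligned matrix per the proxy's convention. Either way the iterate remains in $\mathcal{C}(W_0,z)$.
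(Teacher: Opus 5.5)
Your proposal is correct and matches the paper's proof step for step. Both arguments use the rank-one invariance $B_t=b_tz^\top$, the explicit SVD giving $\mathcal{P}(B_t)=b_tz^\top/(\|b_t\|_2\|z\|_2)$, closedness of $\mathcal{C}(W_0,z)$ to pass to the limit, solving $W_0z+a\|z\|_2^2=u^\star$ for $a=r/\|z\|_2^2$, and subtracting the TACO endpoint of Theorem~\ref{thm:adam-taco}. Two minor differences do not affect the argument: the paper's proxy accumulates $B_t=\beta B_{t-1}+G_t$ rather than your $(1-\beta)$-weighted average, which still preserves the $bz^\top$ form, and the paper leaves the degenerate case $b_t=0$ implicit, whereas you handle it by convention.
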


\Cref{thm:adam-taco,thm:muon-sep} demonstrate that, under this setting, TACO behaves more like Adam than Muon when fine-tuning a model, even though TACO was motivated by the same geometric framework as Muon. In our setting, TACO and Adam follow the same family of parameter updates and arrive at the same solution. In contrast, Muon generally moves in a different direction and ends at a different solution. This matters because models are usually pretrained with AdamW, so preserving Adam-like behavior during fine-tuning may help avoid the optimizer mismatch problem that appears when switching to Muon.

\smartparagraph{Theoretical $\mu$P.} The $\mu$P problem seeks update scalings that preserve well-behaved optimization dynamics as model width increases, allowing the learning rate to transfer across model scales rather than requiring repeated hyperparameter tuning; Muon is one solution to the framework~\citep{bernstein2025deriving}. Since TACO is derived from the same steepest-descent perspective under a different operator geometry, we ask whether its update preserves the spectral scaling required for $\mu$P transfer.

\begin{assumption}\label{assum:winner_balance}{(Balanced TACO Winners)}
For $G\in\mathbb{R}^{m\times n}$, let $c_i$ denote the number of columns whose TACO winner lies in row $i$. We assume $\max_i c_i=\Theta(n/m)$ for all iterates $W_t$.
\end{assumption}

\begin{theorem}{(TACO $\mu$P-scaling)}\label{thm:taco_mup}
Let \Cref{assum:winner_balance} hold. Then the TACO direction $D=\frac{m}{n}\mathcal{S}(G)$ satisfies $\|D\|_2=\frac{m}{n}\sqrt{\max_i c_i},$ for every $G$. Hence $D$ satisfies the standard $\mu$P spectral scaling
$
\|D\|_2=\Theta\left(\sqrt{\frac{m}{n}}\right).
$

\end{theorem}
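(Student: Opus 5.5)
The plan is to compute $\|\mathcal{S}(G)\|_2$ exactly by exploiting its structure: each column has at most one nonzero entry. That entry is $\pm1$, and it is exactly one nonzero whenever $g_j\neq 0$. If $g_j=0$, then $\operatorname{sign}(G_{i_jj})=0$ and the column is empty. In that case we either exclude it from the count $c_i$ or define $c_i$ to count only nonzero winners; I would make this convention explicit. Write $S=\mathcal{S}(G)$. The key observation is that distinct columns of $S$ are orthogonal whenever their winners lie in different rows. More usefully, the rows of $S$ have disjoint supports: row $i$ is nonzero exactly on the set $J_i$ of columns whose winner is in row $i$, and the $J_i$ are pairwise disjoint.

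First I would compute the Gram matrix $SS^\top\in\mathbb{R}^{m\times m}$. For $i\neq k$, $(SS^\top)_{ik}=\sum_j S_{ij}S_{kj}=0$, since no column has nonzeros in two rows. The diagonal entries are $(SS^\top)_{ii}=\sum_{j\in J_i}\operatorname{sign}(G_{ij})^2=c_i$. Hence $SS^\top=\operatorname{diag}(c_1,\dots,c_m)$ is diagonal. Its eigenvalues are the $c_i$, so $\|S\|_2^2=\lambda_{\max}(SS^\top)=\max_i c_i$. Scaling by $m/n$ then gives $\|D\|_2=\frac{m}{n}\sqrt{\max_i c_i}$ for every $G$. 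This is the exact identity claimed, and it requires no assumption.

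Next I would insert \Cref{assum:winner_balance}: $\max_i c_i=\Theta(n/m)$, so $\|D\|_2=\frac{m}{n}\Theta(\sqrt{n/m})=\Theta(\sqrt{m/n})$. This is the $\mu$P spectral scaling $\sqrt{\text{fan-out}/\text{fan-in}}$ of \citet{bernstein2025deriving}. I would also remark on two facts. First, $\sum_i c_i\le n$ forces $\max_i c_i\ge n/m$ (up to the empty-column caveat), so the assumption really only bounds the upper side. Second, the assumption implicitly requires $n\gtrsim m$, or else it should be read as $\max_i c_i=\Theta(\max(1,n/m))$. I would flag this rather than resolve it.

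The only real subtlety is the bookkeeping of zero columns and ties. Under the fixed tie-breaking rule each column has a single winner, so the disjointness of the $J_i$ holds automatically. The diagonal Gram computation is the crux, and it is routine once the disjoint row supports are noted.
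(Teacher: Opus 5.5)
Your proposal is correct and matches the paper's proof: both show $\mathcal{S}(G)\mathcal{S}(G)^\top=\operatorname{diag}(c_1,\ldots,c_m)$, read off $\|\mathcal{S}(G)\|_2=\sqrt{\max_i c_i}$, scale by $m/n$, and then invoke \Cref{assum:winner_balance}. Your two caveats are legitimate refinements that the paper's proof glosses over. First, the paper simply asserts that every column has exactly one nonzero, so $c_i$ must count only nonzero winners. Second, the assumption implicitly requires $n\gtrsim m$, since $\max_i c_i\ge 1$ whenever some column is nonzero.
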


\Cref{thm:taco_mup} shows that TACO satisfies the standard $\mu$P spectral scaling law when the largest-magnitude coordinates selected across columns remain reasonably balanced across rows, rather than becoming disproportionately concentrated in only a few rows. Hence, the theorem provides a theoretical basis for learning rate transfer across model widths. We complement this result with empirical $\mu$P experiments in \Cref{sec:more_experiment_results}, where the same learning rate remains optimal across model scales. 

Together with the continuation analysis above, these theoretical developments position TACO as an unusually favorable combination: 
it inherits the reduced hyperparameter sensitivity associated with $\mu$P scaling,
follows the same continuation geometry as AdamW under our theoretical setting, and retains these properties while requiring almost no persistent optimizer state.

\vspace{-1mm}
\section{Practical Design of TACO}\label{sec:practical implementation}
\vspace{-1mm}

Although vanilla TACO updates are sparse by design, applying $\mathcal{S}$ directly to a single minibatch gradient makes the selected coordinates sensitive to minibatch noise. A conventional dense gradient history can stabilize coordinate selection but also reintroduce the dense $mn$-entry state TACO was meant to avoid. Exploiting TACO's special geometric update structure, our implementation resolves this tension with a sparse gradient history, low-precision state, and immediate gradient processing.
This motivates the following design principle:
\vspace{-1mm}
\begin{tcolorbox}[
    colback=gray!5,
    colframe=gray!75!black,
    boxrule=0.8pt,
    arc=2mm,
    left=2pt,
    right=2pt,
    top=1pt,
    bottom=1pt
]
TACO does not need an accurate model of the gradient history. TACO only needs enough information to recover its dominant coordinate in each column.
\end{tcolorbox}


\smartparagraph{Gradient history and dynamic heavy hitters.}
A dense exponential moving average (EMA) is
    $\tilde M_t = \mu \tilde M_{t-1} + (1-\mu) G_t, \mu = 0.95.$
However, the next TACO direction only needs a reliable winner: the entry with the largest absolute value in each column, rather than a dense reconstruction of $\tilde M_t$. Let $\mathcal{H}_k(X)$ retain the $k$ largest-magnitude entries of every column of $X$ and set all remaining entries to zero. Practical TACO maintains the following update rule:
\begin{equation*}
    M_t = \mathcal{H}_k\!\left(\mu M_{t-1} + (1-\mu)G_t\right),
    \qquad \mu=0.95,\quad k=16,
\end{equation*}
where the unretained entries of the sparse $M_{t-1}$ are treated as zero. The update remains $D_t=\frac{m}{n}\mathcal{S}(M_t)$. In our experiments, we set $k=16$ to control how many entries are retained in each column for future winner selection; it does \emph{not} change the column-wise Top-1 TACO update.
For additional discussion of sparse gradient history approximation,
see \Cref{thm:taco-exact-history} and ablations in \Cref{sec:more_experiment_results}.
The retained history values are stored in \texttt{FP8 E4M3} format, which uses four exponent bits and three mantissa bits~\citep{micikevicius2022fp8}, together with \texttt{int32} for row indices. Winner selection depends primarily on relative magnitude and sign and therefore requires less numerical fidelity than an adaptive moment used to rescale every coordinate.

\smartparagraph{Persistent state memory.}
Each retained entry uses one byte for its \texttt{E4M3} value and four bytes for its \texttt{int32} row index. Thus, for an $m\times n$ matrix, TACO stores $5kn$ bytes, compared with $mn b_{\mathrm{dense}}$ bytes for one dense state matrix whose entries use $b_{\mathrm{dense}}$ bytes. When $m\leq 2^{15}$, \texttt{int16} indices suffice, reducing the history storage to $3kn$ bytes. But some model matrices have more rows than signed \texttt{int16} can index, so we use \texttt{int32}. With \texttt{int32}, the relative history cost is $5k/(m b_{\mathrm{dense}})$, which decreases as the matrix height $m$ grows and is small when $k\ll m$. Thus, TACO reduces persistent state memory complexity from $\gO(mn)$ to $\gO(kn)$.

\definecolor{TACOpurple}{HTML}{8A6596}
\definecolor{TACOgreen}{HTML}{78B7AA}
\definecolor{TACOblue}{HTML}{327B9A}
\definecolor{TACOorange}{HTML}{D89068}

\begin{algorithm}[t]
\small
    \caption{Practical implementation of TACO with
    \colorbox{TACOorange}{\textcolor{white}{gradient history}}
    and 
    \colorbox{TACOgreen}{\textcolor{white}{dynamic heavy hitters}}
    }
    \label{TACO:alg:TACO_optimizer}
    \begin{algorithmic}[1]
    \Require $W_0\in\R^{m\times n}$; learning rate schedule $(\eta_t)_{t\ge0}$; gradient history EMA decay factor $\mu=0.95$; heavy hitter threshold $k=16$; optimizer step budget $T$.
    \Ensure Final parameters $W_T$.
    \State Initialize an empty sparse gradient history $M_{-1}$
    \For{$t = 0,\ldots,T-1$}
        \State Compute $G_t\gets\nabla F(W_t)$ \Comment{\texttt{Backpropagation}}
        \State $G_t \gets\colorbox{TACOorange}{$\displaystyle\textcolor{white}{\mu M_{t-1}+(1-\mu)G_t}$}$ \Comment{\texttt{In-place history update}}
        \State $M_t \gets\colorbox{TACOgreen}{$\displaystyle\textcolor{white}{\mathcal{H}_k}$}\left(G_t\right)$ \Comment{\texttt{Dynamic heavy hitters}}
        \State Store the retained values of $M_t$ in \texttt{E4M3} and their row indices in \texttt{int32}
        \State $W_{t+1}\gets W_t-\eta_t\frac{m}{n}
        \mathcal{S}(M_t)$ 
        \Comment{$\mathcal{S}$ \texttt{is defined in \eqref{TACO:eqn:TACO_operator}}}
        \State Release the dense $G_t$
    \EndFor
    \State \Return $W_T$
    \end{algorithmic}
\end{algorithm}
\begin{figure}[t]
    \centering
    \;
    \begin{minipage}[c]{0.24\linewidth}
        \centering
        \begin{tikzpicture}[
            scale=0.8, transform shape,
            node distance=0.3cm, 
            box/.style={
                draw,
                rounded corners=1pt,
                align=center,
                inner xsep=5pt,
                inner ysep=4pt,
                font=\small,
                text width=4cm 
            },
            arrow/.style={->, >=stealth, thin, shorten >=0.05cm, shorten <=0.05cm}
        ]
    
        \node[box] (grad) {
            Autograd\\
            $p.\texttt{grad} = G_t$
        };
    
        \node[box, below=of grad] (ema) {
            In-place EMA\\
            $G_t \leftarrow \mu M_{t-1}+(1-\mu)G_t$
        };
    
        \node[box, below=of ema] (threshold) {
            Hard threshold\\
            $M_t \leftarrow \mathcal{H}_{16}(G_t)$
        };
    
        \node[box, below=of threshold] (update) {
            Release $G_t$ and update\\
            $W_{t+1} =W_t-\eta_t\frac{m}{n}\mathcal{S}(M_t)$
        };
    
        \draw[arrow] (grad) -- (ema);
        \draw[arrow] (ema) -- (threshold);
        \draw[arrow] (threshold) -- (update);
    
        \end{tikzpicture}
        \label{fig:taco-workflow}
    \end{minipage}
    \;
    \hfill 
    \begin{minipage}[c]{0.725\linewidth}
        \centering
        \includegraphics[width=0.99\linewidth]{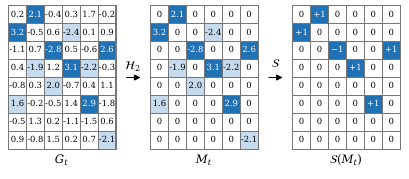}
        \label{fig:hk-selector}
    \end{minipage}
    \vspace{-4mm}
    \caption{\small{\textbf{Illustrative workflow of the memory-efficient practical implementation of TACO.} We use $\mathcal{H}_2$ for illustration; 
    in our experiments, we use $\mathcal{H}_{16}$.~In this example, we highlight the largest- and second-largest-magnitude elements for $\mathcal{H}_2$ in each column. Here, $G_t$ is the dense gradient via \texttt{autograd}, $M_t$ is obtained via $\mathcal{H}_{2}(G_t)$ by retaining the largest- and second-largest-magnitude elements in each column, and finally, the operator $\mathcal{S}$ is used, which keeps the sign of the largest-magnitude element in each column. Only the sparse gradient history $M_t$ persists across iterations.}}
    \vspace{-4mm}
\end{figure}

\smartparagraph{Backward pass memory management.}
TACO is applied to matrix-valued parameters, while scalar and vector parameters use an auxiliary AdamW optimizer. PyTorch \texttt{autograd hooks}~\citep{Pytorch} make each matrix gradient $G_t$ available as soon as it is computed. TACO updates $G_t$ in-place using the sparse gradient history, extracts $\mathcal{H}_k(G_t)$ as the new history, applies the parameter update, and releases $G_t$. We set gradient accumulation to~1, so dense gradients do not persist across microbatches. Gradient checkpointing reduces activation memory. Together, these techniques bring peak training memory close to inference memory. Peak training memory includes model parameters, sparse optimizer state, checkpointed activations, and the current gradient. \texttt{Autograd hooks} and checkpointing are standard systems techniques rather than contributions of this work.

\vspace{-2mm}
\section{Experiments} \label{sec:benchmarking} \label{sec:experiment}
\vspace{-2mm}


\smartparagraph{Implementation details and models used.}{%
  \let\thefootnote\relax
  \footnote{Our code is publicly available at:
\href{https://github.com/Jichao2357/TACO_optimizer}{\texttt{github.com/Jichao2357/TACO\_optimizer}}.}%
}\:We evaluate TACO for full-parameter fine-tuning across five model families: OPT, Qwen3, Pythia, Llama, and Mistral, covering model sizes from 1.3B to 32B parameters. Our experiments include SST-2, RTE, BoolQ, MultiRC, SQuAD, DROP, COPA, and CB, spanning classification, multiple-choice reasoning, and question answering. We first compare TACO with a range of optimizers on OPT-13B, then study its scaling behavior across the OPT and Qwen3 model families and its transfer across different model architectures. Finally, we summarize several ablations characterizing TACO's optimizer step budget, gradient history EMA decay factor, heavy hitter threshold, and update sparsity. Unless otherwise specified, TACO uses a shared hyperparameter configuration across the non-ablation experiments; see complete experimental configurations, dataset construction, prompt formats, hyperparameters, and results in \Cref{sec:experiment_setup}.

\smartparagraph{Memory, trained model quality, and runtime on OPT-13B.}
\Cref{fig:opt13b task by task comparison} compares TACO with memory-efficient and conventional first-order optimizers across eight downstream tasks. TACO consistently requires substantially less peak GPU memory while maintaining competitive task performance. The SST-2 efficiency comparison in \Cref{fig:opt13b efficiency} isolates this advantage: beyond the $25.7$\,GB required for model weights, TACO uses only $1.8$\,GB of additional training memory, including just~$0.16$\,GB of persistent optimizer state. In comparison, Adafactor, GaLore, and FlashAdamW require $26.9$, $31.0$, and $42.0$\,GB of additional memory, respectively, resulting in peak memory of $52.6$, $56.7$, and $67.7$\,GB versus $27.5$\,GB for TACO. 
Although Adafactor has the smallest persistent optimizer state, its substantially larger transient memory results in much higher peak GPU memory than TACO.
TACO reaches 94.2\% SST-2 accuracy at 276 tokens/second, compared with 95.5\% at 350~tokens/second for Adafactor, 95.7\% at 31~tokens/second for GaLore, and 95.5\% at 547~tokens/second for FlashAdamW. 
TACO therefore occupies the low-memory end of the accuracy–memory trade-off while retaining competitive performance and practical throughput.

\begin{figure}[t!]
    \centering
    \includegraphics[width=0.99\linewidth]{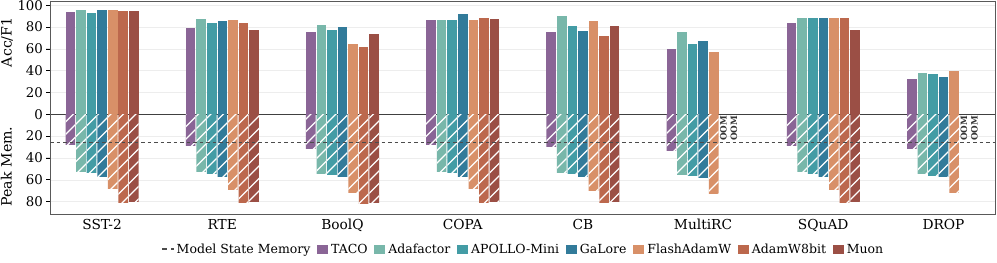}
        \vspace{-1mm}
    \caption{\small{\textbf{
    OPT-13B performance and peak-memory comparison across downstream tasks.} Upper bars report test accuracy or F1; lower mirrored bars report peak GPU memory. \textbf{OOM} denotes configurations that do not fit within the available GPU memory. Across all tasks, TACO consistently requires the least GPU memory while maintaining competitive performance; see \Cref{tab:e1-performance} in \Cref{sec:more_experiment_results} for full results.}
    }
    \label{fig:opt13b task by task comparison}
    \vspace{-1mm}
\end{figure} 
\begin{figure}[t!]
    \centering
    \includegraphics[width=0.99\linewidth]{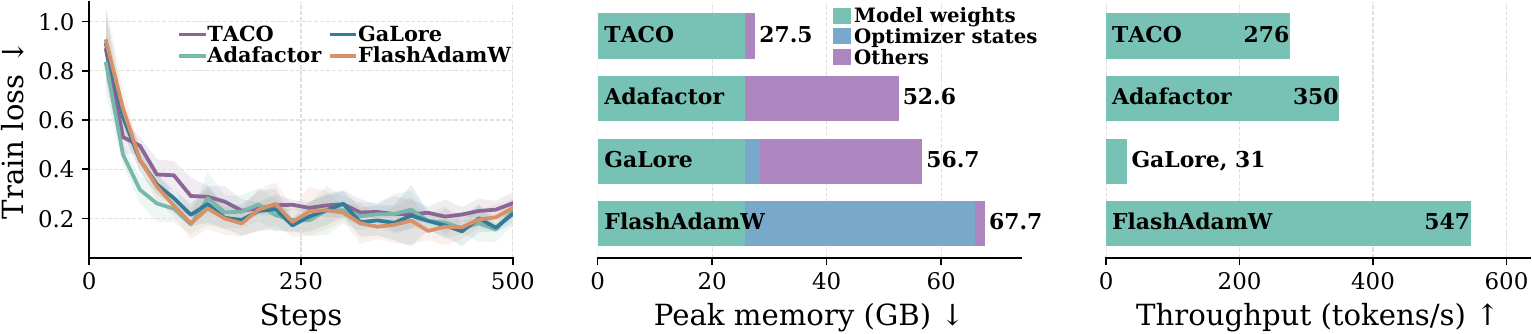}
        \vspace{-1mm}
    \caption{\small \textbf{Efficiency comparison} in terms of optimization trajectory, peak memory decomposition, and training throughput for fine-tuning OPT-13B on SST-2. For the Train loss vs. Steps plot, we show only the first 500 steps, as all optimizers plateaued; the shaded region shows \(\pm\) sample standard deviation across five training seeds; see \Cref{tab:opt13b_sst2_efficiency} in \Cref{sec:more_experiment_results} for full results.
    }\label{fig:opt13b efficiency}
    \vspace{-4mm}
\end{figure}


\smartparagraph{Scaling across model families.} Figure~\ref{fig:model_scaling} shows that TACO remains trainable within the memory limit of a single 80 GB H100 as OPT scales from 1.3B to 30B parameters and Qwen3 scales from 8B to 32B parameters. Within OPT, SST-2 accuracy remains broadly stable, RTE generally improves with scale, and BoolQ is non-monotonic. Qwen3 similarly maintains strong performance across model sizes. Peak memory scales with model size and stays within the capacity of a single H100 80\,GB GPU, even for OPT-30B and Qwen3-32B. 
Importantly, these experiments use the same TACO hyperparameter configuration across model sizes rather than model-specific tuning.

\begin{figure}[t]
    \centering
    \includegraphics[width=0.99\linewidth]{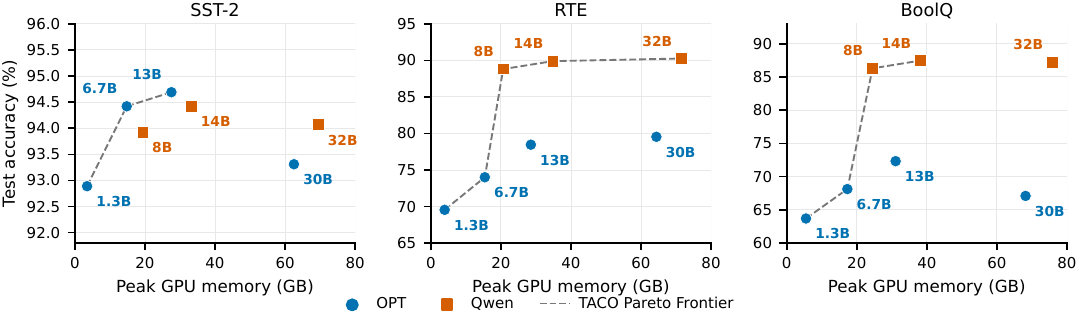}
    \vspace{-2mm}
    \caption{\small \textbf{Scaling behavior of TACO across the OPT and Qwen3 model families.} We plot the test performance against peak GPU memory as model size increases; see \Cref{tab:e2-scaling} in \Cref{sec:more_experiment_results} for full results.
    } 
    \label{fig:model_scaling}
    \vspace{-5mm}
\end{figure}











\smartparagraph{Transfer across architectures.} TACO is agnostic to model architecture. It achieves strong performance on Llama-3.1-8B and Qwen3-32B across all three tasks, while Pythia-12B and Mistral-24B exhibit larger task-dependent variation, particularly on RTE and BoolQ. Peak GPU memory remains close to the model footprint across architectures, consistent with TACO’s small persistent optimizer state.~\Cref{fig:architecture_table_opt_rte} reports architecture transfer results and examines learning rate transfer across OPT model scales, where all four model sizes attain their best validation accuracy at $3\times10^{-5}$.~The broader sweep in \Cref{sec:more_experiment_results} shows the same optimum across all evaluated model-dataset configurations, further supporting the use of a shared TACO configuration across model scales.

\smartparagraph{Gradient history and heavy hitter threshold.}~Practical TACO uses an EMA gradient history with sparse heavy-hitter. Comparing dense gradient history with $k\in\{64,32,16,8,4,2,1\}$, we find moderate sparse histories mostly retain performance comparable to dense gradient history while substantially reducing persistent optimizer state. In particular, $k=16$ uses only 38.3\,MB of optimizer state, a $34.4\times$ reduction from dense gradient history, while achieving the strongest BoolQ and RTE performance in the sweep. Hence, we use $k=16$ as the default heavy-hitter threshold. Across eight tasks, \(\mu=0.95\) improves six, including classification and multiple-choice tasks, with the largest gains on MultiRC (+8.9 F1A) and CB (+7.1\%); SQuAD and DROP are the two exceptions. We therefore retain \(\mu=0.95\) as default; see \Cref{tab:e3-history,tab:e3-history-k} in \Cref{sec:more_experiment_results} for full results.

\smartparagraph{Optimizer step budget.} TACO continues to improve beyond the default 2K-step budget. Increasing the budget from 2K to 16K steps improves BoolQ and RTE by +6.5\% and +4.3\%, respectively, and SQuAD by +2.1 F1. These results show that the default 2K-step budget does not represent TACO's performance ceiling; see
\Cref{tab:e3-steps} in \Cref{sec:more_experiment_results} for full results.

\smartparagraph{Update sparsity.}
We relax TACO's theoretically derived column-wise Top-1 update and evaluate denser Top-$k$ variants. Top-1 achieves the strongest final performance across all three evaluated tasks, while dense updates substantially degrade performance, particularly at Top-8 and beyond. These results support retaining Top-1 as the standard TACO update, consistent with the exact steepest-descent solution under the $1\!\to\!1$ geometry; see
\Cref{tab:e3-topk} in \Cref{sec:more_experiment_results} for full results.

\begin{figure*}[t]
\centering
\newlength{\panelwidth}
\newlength{\panelheight}
\setlength{\panelwidth}{0.494\linewidth}

\setlength{\panelheight}{0.691\panelwidth}

\begin{minipage}[t][\panelheight][c]{\panelwidth}
\centering
\scriptsize

\setlength{\tabcolsep}{2.5pt}

\resizebox{\linewidth}{!}{%
\begin{tabular}{@{}l cc cc cc@{}}
\toprule
\textbf{Model}
& \multicolumn{2}{c}{\textbf{SST-2}}
& \multicolumn{2}{c}{\textbf{RTE}}
& \multicolumn{2}{c}{\textbf{BoolQ}} \\
\cmidrule(lr){2-3}
\cmidrule(lr){4-5}
\cmidrule(lr){6-7}

&
\textbf{Acc.}
& \textbf{Mem.}
& \textbf{Acc.}
& \textbf{Mem.}
& \textbf{Acc.}
& \textbf{Mem.} \\[-1pt]

&
\scriptsize
& \scriptsize\textbf{(GB)}
& \scriptsize
& \scriptsize\textbf{(GB)}
& \scriptsize
& \scriptsize\textbf{(GB)} \\

\midrule

Pythia-12B
& 92.6 & 25.2
& 65.1 & 26.2
& 66.7 & 28.6 \\

Llama-3.1-8B
& 94.2 & 18.7
& 87.0 & 19.8
& 86.6 & 22.9 \\

Qwen3-32B
& 93.1 & 69.6
& 90.4 & 71.6
& 88.0 & 75.9 \\

Mistral-24B
& 87.8 & 50.5
& 54.4 & 52.0
& 69.5 & 55.1 \\

\bottomrule
\end{tabular}%
}\label{tab:e4-architectures}

\end{minipage}
\hfill
\begin{minipage}[t][\panelheight][c]{\panelwidth}
\centering
\vspace*{\fill}

\includegraphics[
  width=0.96\linewidth,
  keepaspectratio
]{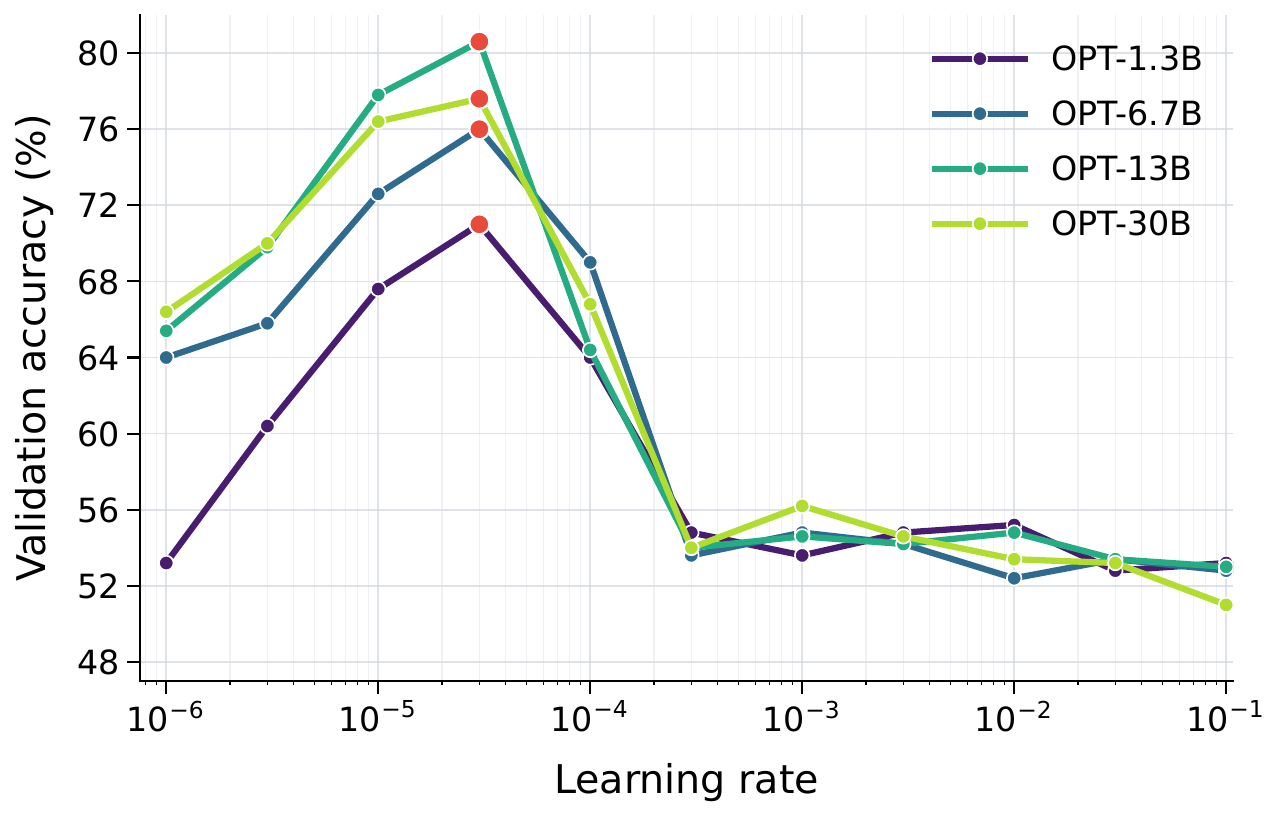}

\vspace*{\fill}
\end{minipage}
\vspace{-4mm}
\caption{\small{
\textbf{Transfer across model architectures and learning rate stability across model scales.}
\textit{Left:} TACO performance and peak GPU memory across Pythia-12B, Llama-3.1-8B, Qwen3-32B, and Mistral-24B on SST-2, RTE, and BoolQ, showing consistent memory efficiency with task-dependent performance. 
\textit{Right:} Learning rate sweep on RTE for OPT-1.3B, OPT-6.7B, OPT-13B, and OPT-30B. All four models attain their best validation accuracy at $3\times10^{-5}$, supporting learning rate transfer across model scales.
}}
\label{fig:architecture_table_opt_rte}
\vspace{-4mm}
\end{figure*}

\vspace{-2mm}
\section{Conclusion}
\vspace{-2mm}

We introduced the \textbf{T}ernary \textbf{A}bsolute-max \textbf{C}olumn-wise \textbf{O}ne-sparse (TACO) optimizer for memory-efficient full-parameter LLM fine-tuning. 
TACO is derived from the exact minimizer of a local optimization problem under a dimension-normalized $1\!\to\!1$ operator norm, yielding a ternary, column-wise one-sparse update. 
Exploiting this structure, practical TACO maintains a sparse, low-precision gradient history for matrix-valued parameters, making persistent optimizer-state memory nearly negligible. We establish a nonconvex convergence guarantee for vanilla TACO and, in a simplified fixed-input setting, show that it shares the continuation geometry and limiting solution of an Adam proxy while generally differing from Muon. 
We further establish $\mu$P spectral scaling under balanced winner counts and give a sufficient condition under which sparse history preserves the dense-history TACO update. 
Across model scales, architectures, and downstream tasks, TACO maintains competitive performance with a small memory footprint, enabling full-parameter fine-tuning of models up to 32B parameters on a single 80\,GB H100 GPU. More broadly, our results suggest that optimization geometry can provide a useful framework for designing memory-efficient first-order optimizers.

\subsection*{Acknowledgment} During this work, HanQin Cai was partially supported by the National Science Foundation Grant DMS-2603464, and Aritra Dutta was partially supported by the Florida Department of Health Grant AWD00007072.

\bibliography{iclr2027_conference}

@inproceedings{AdamW,
title={Decoupled Weight Decay Regularization},
author={Ilya Loshchilov and Frank Hutter},
booktitle={International Conference on Learning Representations},
year={2019}
}

@incollection{Pytorch,
title = {PyTorch: An Imperative Style, High-Performance Deep Learning Library},
author = {Paszke, Adam and Gross, Sam and Massa, Francisco and Lerer, Adam and Bradbury, James and Chanan, Gregory and Killeen, Trevor and Lin, Zeming and Gimelshein, Natalia and Antiga, Luca and Desmaison, Alban and Kopf, Andreas and Yang, Edward and DeVito, Zachary and Raison, Martin and Tejani, Alykhan and Chilamkurthy, Sasank and Steiner, Benoit and Fang, Lu and Bai, Junjie and Chintala, Soumith},
booktitle = {Advances in Neural Information Processing Systems 32},
pages = {8024--8035},
year = {2019},
publisher = {Curran Associates, Inc.}
}

@article{sahu2021rethinking,
  title={Rethinking gradient sparsification as total error minimization},
  author={Sahu, Atal and Dutta, Aritra and M Abdelmoniem, Ahmed and Banerjee, Trambak and Canini, Marco and Kalnis, Panos},
  journal={Advances in Neural Information Processing Systems},
  volume={34},
  pages={8133--8146},
  year={2021}
}

@inproceedings{dutta2020discrepancy,
  title={On the discrepancy between the theoretical analysis and practical implementations of compressed communication for distributed deep learning},
  author={Dutta, Aritra and Bergou, El Houcine and Abdelmoniem, Ahmed M and Ho, Chen-Yu and Sahu, Atal Narayan and Canini, Marco and Kalnis, Panos},
  booktitle={Proceedings of the AAAI Conference on Artificial Intelligence},
  volume={34},
  number={04},
  pages={3817--3824},
  year={2020}
}

@inproceedings{
malladi2023finetuning,
title={Fine-Tuning Language Models with Just Forward Passes},
author={Sadhika Malladi and Tianyu Gao and Eshaan Nichani and Alex Damian and Jason D. Lee and Danqi Chen and Sanjeev Arora},
booktitle={Thirty-seventh Conference on Neural Information Processing Systems},
year={2023}
}

@misc{lang2026poweringzerothordertrainingsubspace,
      title={Powering Up Zeroth-Order Training via Subspace Gradient Orthogonalization}, 
      author={Yicheng Lang and Changsheng Wang and Yihua Zhang and Mingyi Hong and Zheng Zhang and Wotao Yin and Sijia Liu},
      journal={arXiv preprint arXiv.2602.17155},
      year={2026}
}

@inproceedings{xu2021grace,
  title={Grace: A compressed communication framework for distributed machine learning},
  author={Xu, Hang and Ho, Chen-Yu and Abdelmoniem, Ahmed M and Dutta, Aritra and Bergou, El Houcine and Karatsenidis, Konstantinos and Canini, Marco and Kalnis, Panos},
  booktitle={2021 IEEE 41st international conference on distributed computing systems},
  pages={561--572},
  year={2021},
}

@inproceedings{liu2025sparse,
  author    = {Liu, Yong and Zhu, Zirui and Gong, Chaoyu and Cheng, Minhao and Hsieh, Cho-Jui and You, Yang},
  title     = {Sparse {MeZO}: Less Parameters for Better Performance in Zeroth-Order {LLM} Fine-Tuning},
  booktitle = {Advances in Neural Information Processing Systems (NeurIPS)},
  volume    = {38},
  pages     = {21139--21169},
  year      = {2025}
}

@inproceedings{adafactor,
  title={Adafactor: Adaptive learning rates with sublinear memory cost},
  author={Shazeer, Noam and Stern, Mitchell},
  booktitle={International conference on machine learning},
  pages={4596--4604},
  year={2018},
}

@inproceedings{came,
  title={Came: Confidence-guided adaptive memory efficient optimization},
  author={Luo, Yang and Ren, Xiaozhe and Zheng, Zangwei and Jiang, Zhuo and Jiang, Xin and You, Yang},
  booktitle={Proceedings of the 61st Annual Meeting of the Association for Computational Linguistics},
  pages={4442--4453},
  year={2023}
}

@article{adamw8bit,
  title={8-bit optimizers via block-wise quantization},
  author={Dettmers, Tim and Lewis, Mike and Shleifer, Sam and Zettlemoyer, Luke},
  journal={arXiv preprint arXiv:2110.02861},
  year={2021}
}

@InProceedings{galore,
  title = 	 {{G}a{L}ore: Memory-Efficient {LLM} Training by Gradient Low-Rank Projection},
  author =       {Zhao, Jiawei and Zhang, Zhenyu and Chen, Beidi and Wang, Zhangyang and Anandkumar, Anima and Tian, Yuandong},
  booktitle = 	 {Proceedings of the 41st International Conference on Machine Learning},
  pages = 	 {61121--61143},
  year = 	 {2024},
  volume = 	 {235},
  series = 	 {Proceedings of Machine Learning Research},
}

@article{wen2017terngrad,
  title={Terngrad: Ternary gradients to reduce communication in distributed deep learning},
  author={Wen, Wei and Xu, Cong and Yan, Feng and Wu, Chunpeng and Wang, Yandan and Chen, Yiran and Li, Hai},
  journal={Advances in neural information processing systems},
  volume={30},
  year={2017}
}

@article{signsgd,
  title={signSGD with majority vote is communication efficient and fault tolerant},
  author={Bernstein, Jeremy and Zhao, Jiawei and Azizzadenesheli, Kamyar and Anandkumar, Anima},
  journal={arXiv preprint arXiv:1810.05291},
  year={2018}
}

@article{signmuon,
  title={Signmuon: Communication-efficient distributed muon optimization},
  author={Mishra, Neel and Trivedi, Kushagara and Kumar, Pawan},
  journal={arXiv preprint arXiv:2605.16311},
  year={2026}
}

@article{qu2026can,
  title={Can Muon Fine-tune Adam-Pretrained Models?},
  author={Qu, Xingyu and Huang, Peigeng and Horvath, Samuel},
  journal={arXiv preprint arXiv:2605.10468},
  year={2026}
}

@article{bernstein2024oldoptimizer,
  title={Old Optimizer, New Norm: An Anthology},
  author={Bernstein, Jeremy and Newhouse, Laker},
  journal={arXiv preprint arXiv:2409.20325},
  year={2024}
}

@InProceedings{pethick2025normlmo,
  title={Training Deep Learning Models with Norm-Constrained {LMO}s},
  author={Pethick, Thomas and Xie, Wanyun and Antonakopoulos, Kimon and Zhu, Zhenyu and Silveti-Falls, Antonio and Cevher, Volkan},
  booktitle={Proceedings of the 42nd International Conference on Machine Learning},
  pages={49069--49104},
  year={2025},
  volume={267},
  series={Proceedings of Machine Learning Research}
}

@article{shah2025muon,
  title={Practical Efficiency of Muon for Pretraining},
  author={Shah, Ishaan and Polloreno, Anthony M. and Stratos, Karl and others},
  journal={arXiv preprint arXiv:2505.02222},
  year={2025}
}

@misc{bernstein2025deriving,
  author = {Jeremy Bernstein},
  title = {Deriving Muon},
  url = {https://jeremybernste.in/writing/deriving-muon},
  year = {2025}
}

@article{brown2020language,
  title={Language models are few-shot learners},
  author={Brown, Tom and Mann, Benjamin and Ryder, Nick and Subbiah, Melanie and Kaplan, Jared D and Dhariwal, Prafulla and Neelakantan, Arvind and Shyam, Pranav and Sastry, Girish and Askell, Amanda and others},
  journal={Advances in neural information processing systems},
  volume={33},
  pages={1877--1901},
  year={2020}
}

@inproceedings{houlsby2019parameter,
  title={Parameter-efficient transfer learning for NLP},
  author={Houlsby, Neil and Giurgiu, Andrei and Jastrzebski, Stanislaw and Morrone, Bruna and De Laroussilhe, Quentin and Gesmundo, Andrea and Attariyan, Mona and Gelly, Sylvain},
  booktitle={International conference on machine learning},
  pages={2790--2799},
  year={2019}
}

@article{hu2021lora,
  title={Lora: Low-rank adaptation of large language models},
  author={Hu, Edward J and Shen, Yelong and Wallis, Phillip and Allen-Zhu, Zeyuan and Li, Yuanzhi and Wang, Shean and Wang, Lu and Chen, Weizhu},
  journal={arXiv preprint arXiv:2106.09685},
  year={2021}
}

@inproceedings{zhao2025second,
 author = {Zhao, Yanjun and Dang, Sizhe and Ye, Haishan and Dai, Guang and Qian, Yi and Tsang, Ivor},
 booktitle = {International Conference on Learning Representations},
 editor = {Y. Yue and A. Garg and N. Peng and F. Sha and R. Yu},
 pages = {43496--43520},
 title = {Second-Order Fine-Tuning without Pain for LLMs: A Hessian Informed Zeroth-Order Optimizer},
 volume = {2025},
 year = {2025}
}

@inproceedings{yu2025subzero,
  title={Zeroth-Order Fine-Tuning of {LLM}s in Random Subspaces},
  author={Yu, Ziming and Zhou, Pan and Wang, Sike and Li, Jia and Tian, Mi and Huang, Hua},
  booktitle={Proceedings of the IEEE/CVF International Conference on Computer Vision},
  year={2025}
}

@article{micikevicius2022fp8,
  title={{FP8} Formats for Deep Learning},
  author={Micikevicius, Paulius and Stosic, Dusan and Burgess, Neil and Cornea, Marius and Dubey, Pradeep and Grisenthwaite, Richard and Ha, Sangwon and Heinecke, Alexander and Judd, Patrick and Kamalu, John and Mellempudi, Naveen and Oberman, Stuart and Shoeybi, Mohammad and Siu, Michael and Wu, Hao},
  journal={arXiv preprint arXiv:2209.05433},
  year={2022},
}

@inproceedings{gonzalezblalock2026flashoptim,
  title={{FlashOptim}: Optimizers for Memory-Efficient Training},
  author={Gonzalez Ortiz, Jose Javier and Gupta, Abhay and Rinard, Christopher and Blalock, Davis},
  booktitle={International Conference on Machine Learning},
  year={2026},
}

@inproceedings{zhu2025apollo,
  title={{APOLLO}: {SGD}-like Memory, {AdamW}-level Performance},
  author={Zhu, Hanqing and Zhang, Zhenyu and Cong, Wenyan and Liu, Xi and Park, Sem and Chandra, Vikas and Long, Bo and Pan, David Z. and Wang, Zhangyang and Lee, Jinwon},
  booktitle={Proceedings of Machine Learning and Systems},
  year={2025}
}

@misc{jordan2024muon,
  author={Jordan, Keller},
  title={Muon: An Optimizer for Hidden Layers in Neural Networks},
  howpublished={GitHub repository},
  year={2024}
}

@inproceedings{dagan2005pascal_rte1,
author = {Dagan, Ido and Glickman, Oren and Magnini, Bernardo},
title = {The {PASCAL} Recognising Textual Entailment Challenge},
year = {2005},
booktitle = {the First International Conference on Machine Learning Challenges: Evaluating Predictive Uncertainty Visual Object Classification, and Recognizing Textual Entailment}
}

@inproceedings{bar2006second,
   title={The second {PASCAL} recognising textual entailment challenge},
   author={Bar Haim, Roy and Dagan, Ido and Dolan, Bill and Ferro, Lisa and Giampiccolo, Danilo and Magnini, Bernardo and Szpektor, Idan},
   booktitle={Proceedings of the Second PASCAL Challenges Workshop on Recognising Textual Entailment},
   year={2006}
 }

@inproceedings{giampiccolo2007third_rte3,
    title = "The Third {PASCAL} Recognizing Textual Entailment Challenge",
    author = "Giampiccolo, Danilo  and
      Magnini, Bernardo  and
      Dagan, Ido  and
      Dolan, Bill",
    booktitle = "the {ACL}-{PASCAL} Workshop on Textual Entailment and Paraphrasing",
    year = "2007"
}

@inproceedings{bentivogli2009fifth_rte4,
  title={The Fifth {PASCAL} Recognizing Textual Entailment Challenge.},
  author={Bentivogli, Luisa and Clark, Peter and Dagan, Ido and Giampiccolo, Danilo},
  booktitle={Proceedings of the Text Analysis Conference},
  year={2009}
}

@inproceedings{socher2013recursive_sst-2,
    title = "Recursive Deep Models for Semantic Compositionality Over a Sentiment Treebank",
    author = "Socher, Richard  and
      Perelygin, Alex  and
      Wu, Jean  and
      Chuang, Jason  and
      Manning, Christopher D.  and
      Ng, Andrew  and
      Potts, Christopher",
    booktitle = "Proceedings of the 2013 Conference on Empirical Methods in Natural Language Processing",
    year = "2013"
}

@article{zhang2022opt,
  title={Opt: Open pre-trained transformer language models},
  author={Zhang, Susan and Roller, Stephen and Goyal, Naman and Artetxe, Mikel and Chen, Moya and Chen, Shuohui and Dewan, Christopher and Diab, Mona and Li, Xian and Lin, Xi Victoria and others},
  journal={arXiv preprint arXiv:2205.01068},
  year={2022}
}

@inproceedings{wang2019superglue,
  title={Superglue: A stickier benchmark for general-purpose language understanding systems},
  author={Wang, Alex and Pruksachatkun, Yada and Nangia, Nikita and Singh, Amanpreet and Michael, Julian and Hill, Felix and Levy, Omer and Bowman, Samuel},
  booktitle={Advances in neural information processing systems},
  volume={32},
  year={2019}
}

@inproceedings{rajpurkar-etal-2016-squad,
    title = "{SQ}u{AD}: 100,000+ Questions for Machine Comprehension of Text",
    author = "Rajpurkar, Pranav  and
      Zhang, Jian  and
      Lopyrev, Konstantin  and
      Liang, Percy",
    booktitle = "Proceedings of the 2016 Conference on Empirical Methods in Natural Language Processing",
    year = "2016",
    pages = "2383--2392"
}

@inproceedings{dua-etal-2019-drop,
    title = "{DROP}: A Reading Comprehension Benchmark Requiring Discrete Reasoning Over Paragraphs",
    author = "Dua, Dheeru  and
      Wang, Yizhong  and
      Dasigi, Pradeep  and
      Stanovsky, Gabriel  and
      Singh, Sameer  and
      Gardner, Matt",
    booktitle = "Proceedings of the 2019 Conference of the North {A}merican Chapter of the Association for Computational Linguistics: Human Language Technologies, Volume 1",
    year = "2019",
    pages = "2368--2378"
}

@inproceedings{clark-etal-2019-boolq,
    title = "{B}ool{Q}: Exploring the Surprising Difficulty of Natural Yes/No Questions",
    author = "Clark, Christopher  and
      Lee, Kenton  and
      Chang, Ming-Wei  and
      Kwiatkowski, Tom  and
      Collins, Michael  and
      Toutanova, Kristina",
    booktitle = "Proceedings of the 2019 Conference of the North {A}merican Chapter of the Association for Computational Linguistics: Human Language Technologies",
    year = "2019",
    pages = "2924--2936",
}

@inproceedings{de2019commitmentbank,
  title={The commitmentbank: Investigating projection in naturally occurring discourse},
  author={De Marneffe, Marie-Catherine and Simons, Mandy and Tonhauser, Judith},
  booktitle={Sinn und Bedeutung},
  volume={23},
  number={2},
  pages={107--124},
  year={2019}
}

@inproceedings{roemmele2011choice,
  title={Choice of Plausible Alternatives: An Evaluation of Commonsense Causal Reasoning.},
  author={Roemmele, Melissa and Bejan, Cosmin Adrian and Gordon, Andrew S},
  year={2011}
}

@inproceedings{khashabi2018looking,
  title={Looking beyond the surface: A challenge set for reading comprehension over multiple sentences},
  author={Khashabi, Daniel and Chaturvedi, Snigdha and Roth, Michael and Upadhyay, Shyam and Roth, Dan},
  booktitle={Proceedings of the 2018 Conference of the North American Chapter of the Association for Computational Linguistics: Human Language Technologies},
  pages={252--262},
  year={2018}
}

@article{qwen3,
    title={Qwen3 Technical Report}, 
    author={An Yang and Anfeng Li and Baosong Yang and Beichen Zhang and Binyuan Hui and Bo Zheng and Bowen Yu and Chang Gao and Chengen Huang and Chenxu Lv and Chujie Zheng and Dayiheng Liu and Fan Zhou and Fei Huang and Feng Hu and Hao Ge and Haoran Wei and Huan Lin and Jialong Tang and Jian Yang and Jianhong Tu and Jianwei Zhang and Jianxin Yang and Jiaxi Yang and Jing Zhou and Jingren Zhou and Junyang Lin and Kai Dang and Keqin Bao and Kexin Yang and Le Yu and Lianghao Deng and Mei Li and Mingfeng Xue and Mingze Li and Pei Zhang and Peng Wang and Qin Zhu and Rui Men and Ruize Gao and Shixuan Liu and Shuang Luo and Tianhao Li and Tianyi Tang and Wenbiao Yin and Xingzhang Ren and Xinyu Wang and Xinyu Zhang and Xuancheng Ren and Yang Fan and Yang Su and Yichang Zhang and Yinger Zhang and Yu Wan and Yuqiong Liu and Zekun Wang and Zeyu Cui and Zhenru Zhang and Zhipeng Zhou and Zihan Qiu},
    journal = {arXiv preprint arXiv:2505.09388},
    year={2025}
}

@inproceedings{biderman2023pythia,
  title={Pythia: A suite for analyzing large language models across training and scaling},
  author={Biderman, Stella and Schoelkopf, Hailey and Anthony, Quentin Gregory and Bradley, Herbie and O’Brien, Kyle and Hallahan, Eric and Khan, Mohammad Aflah and Purohit, Shivanshu and Prashanth, USVSN Sai and Raff, Edward and others},
  booktitle={International Conference on Machine Learning},
  pages={2397--2430},
  year={2023}
}

@article{llama3herd2024,
  title={The Llama 3 Herd of Models},
  author={Dubey, Abhimanyu and Jauhri, Abhinav and Pandey, Abhinav and Kadian, Agam and Al-Dahle, Ahmad and Letman, Aiesha and Akhoundi, Akhil and Alibek, Akbar and Alumae, Tanel and Anichini, Anirudh \textit{et al.}},
  journal={arXiv preprint arXiv:2407.21783},
  year={2024},
}

@misc{MISTRAL-CITATION,
  author       = {{Mistral AI Team}},
  title        = {Mistral-Small-24B-Base-2501 Foundation Model},
  year         = {2025},
  publisher    = {Hugging Face},
  howpublished = {\url{https://huggingface.co/mistralai/Mistral-Small-24B-Base-2501}},
  note         = {Released January 2025}
}

@article{STP,
   title = {{Stochastic Three Points Method for Unconstrained Smooth Minimization}} ,
   author =    {Bergou, El Houcine and Gorbunov, Eduard and Richt\'{a}rik, Peter },
   journal = {SIAM Journal on Optimization },
   year =      {2020},
   address = {},
   series =    {},
   edition =   {},
   volume =    {30(4):2726–2749},
   url =       {}}

@inproceedings{SMTP,
  title     = {A Stochastic Derivative Free Optimization Method with Momentum},
  author    = {Gorbunov, Eduard and Bibi, Adel and Sener, Ozan and Bergou, El Houcine and Richtarik, Peter},
  booktitle = {International Conference on Learning Representations},
  year      = {2020}
}

@article{IS-STP,
   author =    { Bibi, Adel and  Bergou, El Houcine and Sener, Ozan and  Ghanem, Bernard and  Richt\'{a}rik, Peter },
   title = {{ A stochastic derivative-free optimization
method with importance sampling: Theory and Learning to Control}} ,
   journal = {Proceedings of the 34th AAAI Conference on Artificial Intelligence},
   year =      {2020},
   address = {New York, NY},
   series =    {},
   edition =   {},
   volume =    {},
   number={},
   pages={ 3275-3282},
   url =       {}}

@article{MiSTP, 
title={{Minibatch Stochastic Three Points Method for Unconstrained Smooth Minimization}}, volume={38},  number={18}, journal={Proceedings of the AAAI Conference on Artificial Intelligence}, author={Boucherouite, Soumia and Malinovsky, Grigory and Richtárik, Peter and Bergou, El Houcine}, year={2024}, pages={20344-20352} }

@InProceedings{cai2021zobcd,
  title     = {A Zeroth-Order Block Coordinate Descent Algorithm for Huge-Scale Black-Box Optimization},
  author    = {Cai, HanQin and Lou, Yuchen and Mckenzie, Daniel and Yin, Wotao},
  booktitle = {Proceedings of the 38th International Conference on Machine Learning},
  pages     = {1193--1203},
  year      = {2021},
  volume    = {139}
}

@article{kim2025cars,
  title     = {Curvature-Aware Derivative-Free Optimization},
  author    = {Kim, Bumsu and McKenzie, Daniel and Cai, HanQin and Yin, Wotao},
  journal   = {Journal of Scientific Computing},
  year      = {2025},
  volume    = {103},
  number    = {43},
  pages     = {1--28}
}
\bibliographystyle{iclr2027_conference}

\clearpage
\begin{center}
\Large
Supplementary Material for\\
\textsc{TACO:~{T}ernary {A}bsolute-max {C}olumn-wise {O}ne-sparse\\ Optimizer for LLM Fine-Tuning}
\end{center}
\tableofcontents

\vspace{0.2in}

\appendix
\crefalias{section}{appendix}
\crefname{appendix}{appendix}{appendices}
\Crefname{appendix}{Appendix}{Appendices}

\textbf{Organization of the Appendix.} We organize the Appendix as follows.
\Cref{sec:theory-extension} contains the proofs of the theorems in \Cref{sec:TACO,sec:geometry}. In particular, it contains the proofs of \Cref{thm:taco-convergence} to \Cref{thm:taco_mup}.
\Cref{sec:experiment_setup} describes the experimental setup, including the models and datasets, implementation details, hyperparameter settings, and hardware configuration.
\Cref{sec:more_experiment_results} also provides additional experimental results, including the complete optimizer comparison, model scaling results, ablation studies, and architecture transfer results.


\section{Theoretical Analysis and Proofs}
\label{sec:theory-extension}
This section provides proofs and additional theoretical context for the results in \Cref{sec:TACO,sec:geometry}. We first establish the convergence guarantee for vanilla TACO, then examine its continuation geometry and $\mu$P scaling, and conclude with a condition under which sparse gradient histories preserve the dense-history TACO update.

\subsection{Vanilla TACO Convergence}\label{appendix: Taco convergence}

We begin by proving the convergence guarantee in Theorem~\ref{thm:taco-convergence} for the history-free iteration. The proof uses $\langle G_t, D_t\rangle_F=\|G_t\|_{(1,n\to1,m)^*}$ to express the first-order decrease exactly, while smoothness contributes the step-size penalty $L\eta_t^2/2$.

\setcounter{theorem}{0}
\begin{theorem}{(Nonconvex convergence of vanilla TACO)}
Let Assumption~\ref{TACO:L-smooth_assumption} hold and define $\Delta_F=F(W_0)-F_{\star}$. The iterates $\{W_t\}_{t=0}^{T}$ in \eqref{eq:vanilla TACO update} with step size $\eta_t>0$ follow
\begin{equation*}
    \min_{0\le t<T}\|\nabla F(W_t)\|_{(1,n\to1,m)^*}
    \le
    \frac{\Delta_F+\frac{L}{2}\sum_{t=0}^{T-1}\eta_t^2}
         {\sum_{t=0}^{T-1}\eta_t}.
\end{equation*}
In particular, with $\eta=\sqrt{\tfrac{2\Delta_F}{LT}}$, we have
    $
        \min_{0\le t<T} \|\nabla F(W_t)\|_{(1,n \to 1,m)^*}\le \sqrt{\frac{2L\Delta_F}{T}}.
    $
\end{theorem}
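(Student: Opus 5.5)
The argument is the standard steepest-descent descent-lemma argument, carried out in the primal--dual pair $\bigl(\|\cdot\|_{(1,n\to1,m)},\|\cdot\|_{(1,n\to1,m)^*}\bigr)$. I first verify two facts about $D_t=\frac{m}{n}\mathcal{S}(G_t)$.

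\emph{Primal norm.} Each column of $\mathcal{S}(G_t)$ has at most one nonzero entry, equal to $\pm1$. Hence $\max_j\|(D_t)_j\|_1\le \frac{m}{n}$, and therefore $\|D_t\|_{(1,n\to1,m)}=\frac{n}{m}\max_j\|(D_t)_j\|_1\le 1$.

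\emph{Dual pairing.} Column by column, $\langle g_j,\frac{m}{n}\mathcal{S}(G)_j\rangle=\frac{m}{n}|G_{i_j j}|=\frac{m}{n}\|g_j\|_\infty$. Summing over $j$ gives
\[
\langle G_t,D_t\rangle_F=\tfrac{m}{n}\textstyle\sum_j\|g_j\|_\infty=\|G_t\|_{(1,n\to1,m)^*}.
\]
If $g_j=0$, the sign is zero and the identity still holds.

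Next I derive the descent lemma from Assumption~\ref{TACO:L-smooth_assumption}. Write $F(W+\Delta)-F(W)-\langle\nabla F(W),\Delta\rangle_F=\int_0^1\langle\nabla F(W+s\Delta)-\nabla F(W),\Delta\rangle_F\,ds$. By the generalized Cauchy--Schwarz inequality $\langle A,B\rangle_F\le\|A\|_*\|B\|$ for a norm and its dual, the integrand is at most $Ls\|\Delta\|^2$. This gives
\[
F(W+\Delta)\le F(W)+\langle\nabla F(W),\Delta\rangle_F+\tfrac{L}{2}\|\Delta\|^2 .
\]
The one subtle point is that the stated dual norm really is the dual of $\|\cdot\|_{(1,n\to1,m)}$ under the Frobenius pairing. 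This follows from the exact-minimizer computation above: $\sup_{\|D\|\le1}\langle G,D\rangle_F=\frac{m}{n}\sum_j\|g_j\|_\infty$, since the constraint decouples into per-column $\ell_1$ balls of radius $m/n$. I take this as given from the main text.

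Applying the descent lemma with $\Delta=-\eta_tD_t$ gives
\[
F(W_{t+1})\le F(W_t)-\eta_t\|G_t\|_{(1,n\to1,m)^*}+\tfrac{L}{2}\eta_t^2 .
\]
Telescoping from $t=0$ to $T-1$ and using $F(W_T)\ge F_\star$ yields
\[
\textstyle\sum_t\eta_t\|G_t\|_*\le\Delta_F+\tfrac{L}{2}\sum_t\eta_t^2 .
\]
Bounding the left side below by $\bigl(\min_t\|G_t\|_*\bigr)\sum_t\eta_t$ and dividing by $\sum_t\eta_t$ gives the first claim.

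For the constant step size $\eta$, the bound becomes $\frac{\Delta_F}{T\eta}+\frac{L\eta}{2}$. Substituting $\eta=\sqrt{2\Delta_F/(LT)}$ makes each term equal to $\sqrt{L\Delta_F/(2T)}$, so their sum is $\sqrt{2L\Delta_F/T}$.

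No step is a real obstacle. The only care needed is to confirm the duality of the norm pair, so that both the Cauchy--Schwarz step and the exact-decrease identity are valid.
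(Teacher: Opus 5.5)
Your proposal is correct and follows the same route as the paper. The paper's proof likewise applies the descent inequality in the $(1,n\to1,m)$ primal--dual pair, uses $\|D_t\|_{(1,n\to1,m)}\le 1$ and $\langle G_t,D_t\rangle_F=\|G_t\|_{(1,n\to1,m)^*}$, then telescopes and bounds the weighted sum below by $\bigl(\min_t\|G_t\|_*\bigr)\sum_t\eta_t$; your extra steps (checking the two identities, deriving the descent lemma via the integral form and the dual-norm inequality, and the constant-step arithmetic) are all sound and only fill in what the paper invokes directly.
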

\begin{proof}
    By Assumption \ref{TACO:L-smooth_assumption}, using the $L$-smoothness of $F$, for any update direction $D$, we have
    \begin{equation}\label{eq:desecent}
        F(W + D) \le F(W) + \lara{\nabla F(W), D}_F + \frac{L}{2} \|D\|_{(1,n\to 1,m)}^2.
    \end{equation}
    Since $\nabla F(W)=G,$ with $W_{t+1} = W_t - \eta_t D_t$, we have $\|D_t\|_{(1,n \to 1, m)} \le 1$, $\lara{G_t, D_t}_F = \|G_t\|_{(1,n\to 1,m)^*}$, and (\ref{eq:desecent}) reduces to
    \begin{equation}\label{eq:desecent1}
        F(W_{t+1}) \le F(W_t) - \eta_t \| G_t\|_{(1,n \to 1,m)^*} + \frac{L}{2} \eta_t^2.
    \end{equation}
    Summing (\ref{eq:desecent1}) with $t$ varying from $0$ to $T-1$ gives
    \begin{equation}
        F(W_T) - F(W_0) \le - \sum_{t=0}^{T-1} \eta_t \| G_t\|_{(1,n \to 1,m)^*} + \frac{L}{2} \sum_{t=0}^{T-1} \eta_t^2.
    \end{equation}
    Note that $F \ge F_{\star}$. Reordering the inequality, we obtain
    \begin{equation}
        \parentheses{\min_{t< T}\| G_t\|_{(1,n \to 1,m)^*}} \sum_{t=0}^{T-1} \eta_t  \le \Delta F + \frac{L}{2}\sum_{t=0}^{T-1} \eta_t^2,
    \end{equation}
    which proves the result.
\end{proof}

The resulting bound balances the initial objective gap $\Delta_F$ against the accumulated step-size penalty. Choosing $\eta=\sqrt{2\Delta_F/(LT)}$ gives the $T^{-1/2}$ stationarity bound.

\subsubsection{Multilayer analysis}
For neural network training, consider matrix parameters
\begin{equation*}
    W = \parentheses{W^{(1)}, \cdots, W^{(L)}}, W^{(\ell)} \in \R^{m_\ell \times n_\ell},
\end{equation*}
and the objective $\mathcal{L}: \R^{m_1\times n_1}\times \cdots \times\R^{m_L \times n_L} \to \R$. For each layer, define $N_\ell (D^{(\ell)}):= \|D^{(\ell)}\|_{(1,n_\ell \to 1,m_\ell)}$. On the product space $\R^{m_1\times n_1}\times \cdots \times\R^{m_L \times n_L}$, define $\mathcal{N}(D) = \max_\ell \| D^{(\ell)}\| _{(1,n_\ell \to 1,m_\ell)}$.
The dual norm of $\mathcal{N}$ is given by
\begin{equation*}
    \mathcal{N}^* (G) = \sum_{\ell = 1}^L \|G^{(\ell)}\|_{(1,n_\ell \to 1,m_\ell)^*} = \sum_{\ell = 1}^L \frac{m_\ell}{n_\ell} \sum_{j} \|g_j^{(\ell)}\|_\infty.
\end{equation*}
The local minimization problem under this multilayer setting is
\begin{equation*}
    \min_{D} \sum_{\ell = 1}^L \lara{G^{(\ell)}, D^{(\ell)}}_F, \quad \text{s.t. } \mathcal{N}(D) \le 1.
\end{equation*}
The exact minimizer is obtained simultaneously in every layer,
\begin{equation*}
    D^{(\ell)} = \frac{m_\ell}{n_\ell} S(G^{(\ell)}), \quad \ell = 1, \cdots, L.
\end{equation*}
Thus, the parameter update is
\[
    W_{t+1}^{(\ell)} = W_{t}^{(\ell)} - \eta_t D_t^{(\ell)},\quad \ell = 1, \cdots, L.
\]
Note that every layer receives its geometry scaling factor $\frac{m_\ell}{n_\ell}$, and the optimization learning rate $\eta_t$ remains a single shared scalar across all layers. This distinction is important; it suggests that the learning rate is an optimization hyperparameter that is invariant to the parameter shape.

\subsection{Continuation Geometry, Implicit Bias, and $\mu$P-scaling Analysis}

Our implicit-bias analysis extends the implicit-bias study of \cite{qu2026can} beyond their least-squares setting. Rather than restricting the objective to a squared loss, we consider the more general class $F(W)=\phi(Wz)$ for a differentiable function $\phi$, and characterize the continuation geometry and limiting solutions induced by TACO, the Adam proxy, and the Muon proxy under a common target output. The least-squares problem studied in \cite{qu2026can} is recovered as a specific instance of this framework; we return to this case after the general results and show how our solutions reduce to their previously derived SignGD and Muon endpoints.

We consider $F(W)=\phi(Wz)$ and ask which weights are selected among matrices producing the same target output $u^\star$. For the implicit-bias analysis, we use the following optimizer proxies:
\begin{tcolorbox}[
    colback=gray!5,
    colframe=gray!75!black,
    boxrule=0.8pt,
    arc=2mm,
    left=2pt,
    right=2pt,
    top=1pt,
    bottom=1pt
]

The \textbf{Adam proxy} is initialized with $M_{-1}=V_{-1}=0$ and follows
\begin{equation}\label{eq:adam-proxy}
\begin{array}{ccc}
\displaystyle M_t=\beta_1M_{t-1}+(1-\beta_1)G_t
&
\displaystyle V_t=\beta_2V_{t-1}+(1-\beta_2)G_t\odot G_t
&
\displaystyle \hat M_t=\frac{M_t}{1-\beta_1^{t+1}}
\\[3pt]
\multicolumn{3}{c}{
\displaystyle
\hat V_t=\frac{V_t}{1-\beta_2^{t+1}}
\qquad
W_{t+1}=W_t-\eta_t\frac{\hat M_t}{\sqrt{\hat V_t}}
}
\end{array}
\end{equation}
\end{tcolorbox}

\begin{tcolorbox}[
    colback=gray!5,
    colframe=gray!75!black,
    boxrule=0.8pt,
    arc=2mm,
    left=2pt,
    right=2pt,
    top=1pt,
    bottom=1pt
]

The \textbf{Muon proxy} is initialized with $B_{-1}=0$ and follows
\begin{equation}\label{eq:muon-proxy}
\begin{aligned}
B_t = \beta B_{t-1}+G_t,\quad
W_{t+1} = W_t-\eta_t\mathcal{P}(B_t), \quad
\mathcal{P}(B_t) = UV^\top,
\quad
B_t=U\Sigma V^\top.
\end{aligned}
\end{equation}
Here, $G_t=\nabla F(W_t)$.
\end{tcolorbox}



We note that both proxies use momentum; however, their resulting theoretical developments are invariant to momentum. We use momentum for a more realistic case.

The proofs of Theorems~\ref{thm:adam-taco} and~\ref{thm:muon-sep} begin with $G_t=g_tz^\top$, where $g_t=\nabla\phi(W_tz)$. As a result, the  gradient columns are scaled copies of the same vector. We track this structure through the optimizer histories where TACO's and Adam proxy's winners yield the factor $\operatorname{sign}(z)^\top$, whereas the Muon proxy obtains $z^\top$.

\setcounter{theorem}{1}
\begin{theorem}({Adam and TACO continuation equivalence})
Let \Cref{ass:coherent,ass:attained} hold, and let the Adam Proxy follow \eqref{eq:adam-proxy}. Then both the TACO and Adam sequences lie in $\mathcal{C}(W_0,\operatorname{sign}(z))$ for all $t\geq 0$, and
\[W_{\rm TACO}^\star=W_{\rm Adam}^\star=W_0+\frac{(u^\star-W_0z)\operatorname{sign}(z)^\top}{\|z\|_1}.\]
\end{theorem}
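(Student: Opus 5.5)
Under \Cref{ass:coherent}, the chain rule gives $G_t=\nabla F(W_t)=g_tz^\top$ with $g_t=\nabla\phi(W_tz)\in\R^m$. The strategy is to prove by induction on $t$ that each optimizer increment has the rank-one form $a_t\operatorname{sign}(z)^\top$ for some $a_t\in\R^m$. It then follows that $W_t\in\mathcal{C}(W_0,\operatorname{sign}(z))$ for all $t$. Because the continuation set is closed, the limit $W^\star$ also lies in it. Finally, \Cref{ass:attained} pins down the coefficient vector uniquely.

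\emph{TACO step.} Column $j$ of $G_t$ is $z_j g_t$, so $|G_{ij}|=|z_j||g_{t,i}|$. For any column with $z_j\neq0$, the argmax row $i_j$ is therefore a row $i^\ast$ maximizing $|g_{t,i}|$. Under the fixed deterministic tie-breaking rule, the same $i^\ast$ is chosen in every such column. The selected sign is $\operatorname{sign}(g_{t,i^\ast})\operatorname{sign}(z_j)$. If $z_j=0$, the whole column vanishes. Reading $\operatorname{sign}(0)=0$, the entry written in that column is zero, which again matches $\operatorname{sign}(z_j)=0$. Hence $\mathcal{S}(G_t)=\operatorname{sign}(g_{t,i^\ast})e_{i^\ast}\operatorname{sign}(z)^\top$, and the increment is $-\eta_t\frac{m}{n}\operatorname{sign}(g_{t,i^\ast})e_{i^\ast}\operatorname{sign}(z)^\top$. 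If $g_t=0$, the increment is zero, which is trivially of the required form.

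\emph{Adam step.} By induction, $M_t=m_tz^\top$ with $m_t=\sum_{s\le t}\beta_1^{t-s}(1-\beta_1)g_s$. Similarly, $V_t=v_t(z\odot z)^\top$ with $v_t=\sum_s\beta_2^{t-s}(1-\beta_2)g_s\odot g_s$. Entrywise, $\hat M_{t,ij}/\sqrt{\hat V_{t,ij}}=c_t\,m_{t,i}z_j/(\sqrt{v_{t,i}}|z_j|)$, where $c_t$ collects the bias corrections. This equals $\bigl(c_tm_{t,i}/\sqrt{v_{t,i}}\bigr)\operatorname{sign}(z_j)$. So the Adam increment is $a_t\operatorname{sign}(z)^\top$ as well. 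The one subtle point is the $0/0$ convention: I would adopt $0/0=0$, or equivalently the $\epsilon\to0$ limit, for entries with $z_j=0$ or $v_{t,i}=0$. That convention keeps the factorization exact.

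\emph{Limit identification.} This is the main step. Every iterate has the form $W_0+a\operatorname{sign}(z)^\top$, and $\{a\operatorname{sign}(z)^\top : a\in\R^m\}$ is a linear subspace, hence closed. So any limit is $W^\star=W_0+a^\star\operatorname{sign}(z)^\top$. Applying \Cref{ass:attained}, $u^\star=W^\star z=W_0z+a^\star(\operatorname{sign}(z)^\top z)=W_0z+a^\star\|z\|_1$. Since $z\neq0$, we have $\|z\|_1>0$, which gives $a^\star=(u^\star-W_0z)/\|z\|_1$. The identical formula holds for both methods, yielding $W^\star_{\rm TACO}=W^\star_{\rm Adam}$. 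The main obstacle is the degenerate cases: zero coordinates of $z$, zero gradients, and the Adam $0/0$ entries. I expect these to be handled cleanly by the stated conventions together with the observation that the continuation set is closed.
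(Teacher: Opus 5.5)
Your argument is correct and follows the paper's proof: factor $G_t=g_tz^\top$, show every increment has the form $a_t\operatorname{sign}(z)^\top$, then solve $a\|z\|_1=u^\star-W_0z$. For TACO this uses the same winner row in every column with $z_j\neq0$; for the Adam proxy it uses $z_j/|z_j|=\operatorname{sign}(z_j)$ together with the bias-correction constant $c_t$.

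\textbf{Scope difference.} The paper's proof runs the TACO part through the practical sparse history $M_t=\mathcal{H}_k(\mu M_{t-1}+(1-\mu)G_t)$ rather than the history-free $\mathcal{S}(G_t)$. If you read ``TACO'' that way, you need one extra induction step:
\begin{itemize}
\item If $M_{t-1}=p_{t-1}z^\top$, then $\mu M_{t-1}+(1-\mu)G_t=\tilde p_tz^\top$.
\item Every nonzero column $z_j\tilde p_t$ has the same magnitude ordering over rows.
\item With ties broken by row index, $\mathcal{H}_k$ therefore keeps the same row set $\mathcal{I}_t$ in every column, and zero columns stay zero, so $M_t=p_tz^\top$.
\end{itemize}
Your $\mathcal{S}$ argument then applies verbatim with $g_t$ replaced by $p_t$. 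Your vanilla case is the special case $\mu=0$.

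\textbf{Points you make explicit that the paper leaves implicit.} Conversely, you spell out two things the paper's proof of this theorem does not. First, $\mathcal{C}(W_0,\operatorname{sign}(z))$ is a closed affine subspace, so the limit $W^\star$ stays in it; the paper says this only for Muon. Second, you fix a $0/0$ convention for Adam-proxy entries with $z_j=0$ or $v_{t,i}=0$. This is genuinely needed, because the proxy has no $\epsilon$.

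\textbf{Tie-breaking.} Both arguments need the tie-breaking rule to be column-independent, for example smallest row index, as the paper states. A fixed deterministic rule that depended on the column index $j$ could select different winner rows in different columns. That would break the rank-one form $a\operatorname{sign}(z)^\top$. So your sentence ``under the fixed deterministic tie-breaking rule, the same $i^\ast$ is chosen in every such column'' should state this requirement explicitly.
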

\vspace{-6mm}
\begin{proof}
Assumption~\ref{ass:coherent} and the chain rule give $G_t=\nabla F(W_t)=g_tz^\top$ with $g_t=\nabla\phi(W_tz)$.

For TACO,  $M_t=p_tz^\top$ for some $p_t\in\mathbb{R}^m$. This holds at $t=0$ since $M_{-1}=0$. If $M_{t-1}=p_{t-1}z^\top$, then $\mu M_{t-1}+(1-\mu)G_t=\tilde{p}_tz^\top$ with $\tilde{p}_t=\mu p_{t-1}+(1-\mu)g_t$. 

Column $j$ of $\tilde{p}_tz^\top$ equals $z_j\tilde{p}_t$, so its entry magnitudes are $|z_j||\tilde{p}_{t,i}|$ and their ordering does not depend on $j$.

The nonzero columns have the same magnitude ordering, with
equal-magnitude entries ordered by row index. Hence, $H_k$
retains the same row set $\mathcal I_t$ in each nonzero column,
while zero columns remain zero, giving $M_t=p_tz^\top$.

The same ordering argument applied to $S$ selects a single row $i_t^\star$ in every column with $z_j\neq 0$ and returns a zero column whenever $z_j=0$, so $S(M_t)=\operatorname{sign}(p_{t,i_t^\star})e_{i_t^\star}\operatorname{sign}(z)^\top$ and $D_t=\frac{m}{n}S(M_t)$ has the form $a\operatorname{sign}(z)^\top$.

For the Adam proxy,  $M_t=p_tz^\top$ and $V_t=q_t(z\odot z)^\top$ with $q_t\geq 0$ entrywise. Both hold at $t=0$ and propagate, since $G_t\odot G_t=(g_t\odot g_t)(z\odot z)^\top$. Bias correction preserves the two forms, so the $(i,j)$ entry of $\hat{M}_t/\sqrt{\hat{V}_t}$ equals $c_t(p_{t,i}/\sqrt{q_{t,i}})\operatorname{sign}(z_j)$ with $c_t=\sqrt{1-\beta_2^{t+1}}/(1-\beta_1^{t+1})$, giving an update of the form $a\operatorname{sign}(z)^\top$ as well.

Hence, both sequences lie in $\mathcal{C}(W_0,\operatorname{sign}(z))$. 

For $W=W_0+a\operatorname{sign}(z)^\top$ we have $\operatorname{sign}(z)^\top z=\|z\|_1$, hence $Wz=W_0z+a\|z\|_1$. Imposing $Wz=u^\star$ and using $\|z\|_1>0$ determines $a=(u^\star-W_0z)/\|z\|_1$.
\end{proof}

\begin{theorem}({Muon continuation separation})
Let \Cref{ass:coherent,ass:attained} hold and let the Muon proxy follow \eqref{eq:muon-proxy}. Further let $r=u^\star-W_0z$. Then Muon's iterates lie in $\mathcal{C}(W_0,z)$ and
$$W_{\rm Muon}^\star=W_0+\frac{rz^\top}{\|z\|_2^2},\qquad W_{\rm Muon}^\star-W_{\rm TACO}^\star=r\left(\frac{z}{\|z\|_2^2}-\frac{\operatorname{sign}(z)}{\|z\|_1}\right)^\top.$$
\end{theorem}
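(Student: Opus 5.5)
The plan follows the structure of the proof of \Cref{thm:adam-taco}: first show by induction that the Muon proxy's momentum stays rank-one with right factor $z$, then read off the continuation set, and finally solve the output constraint. By \Cref{ass:coherent} and the chain rule, $G_t=g_tz^\top$ with $g_t=\nabla\phi(W_tz)$. The induction hypothesis is $B_t=b_tz^\top$ for some $b_t\in\R^m$. It holds at $t=0$ since $B_{-1}=0$ gives $B_0=g_0z^\top$. If $B_{t-1}=b_{t-1}z^\top$, then $B_t=(\beta b_{t-1}+g_t)z^\top$, so the hypothesis propagates.

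Next I compute $\mathcal{P}$ on a rank-one matrix. If $b_t\neq0$, the compact SVD of $b_tz^\top$ is
\[
\frac{b_t}{\|b_t\|_2}\,\bigl(\|b_t\|_2\|z\|_2\bigr)\,\frac{z^\top}{\|z\|_2},
\]
so $\mathcal{P}(B_t)=\frac{b_t}{\|b_t\|_2}\frac{z^\top}{\|z\|_2}$, which has the form $az^\top$. If $b_t=0$, the update is zero; we take $\mathcal{P}(0)=0$, the natural convention, and this is still of the form $az^\top$. The main subtlety is exactly this choice of SVD. Any reduced SVD of a rank-one matrix has $V$ spanned by $z/\|z\|_2$, up to sign, and the signs of $U$ and $V$ cancel, so the result is unambiguous. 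I would spell this out, since a full SVD with zero singular values would make $UV^\top$ ill-defined. With this in hand, induction on $W_{t+1}=W_t-\eta_t\mathcal{P}(B_t)$ shows $W_t\in\mathcal{C}(W_0,z)$ for all $t$.

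The set $\mathcal{C}(W_0,z)=\{W_0+az^\top\}$ is a closed affine subspace. Therefore the limit $W^\star_{\rm Muon}$ from \Cref{ass:attained} also lies in it, say $W^\star_{\rm Muon}=W_0+az^\top$. Imposing $W^\star z=u^\star$ gives $W_0z+a\|z\|_2^2=u^\star$, so $a=r/\|z\|_2^2$, using $z\neq0$. Finally, subtracting the expression for $W^\star_{\rm TACO}$ from \Cref{thm:adam-taco}, which equals $W_0+r\operatorname{sign}(z)^\top/\|z\|_1$, gives the stated difference formula directly.

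The closedness point also matters for \Cref{thm:adam-taco} and should be noted there. Beyond that, the only real obstacle is the well-definedness of $\mathcal{P}$ on rank-one and zero matrices; the rest is bookkeeping.
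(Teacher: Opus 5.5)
Your proposal is correct and follows essentially the same route as the paper: the induction $B_t=b_tz^\top$, the explicit rank-one SVD giving $\mathcal{P}(B_t)=b_tz^\top/(\|b_t\|_2\|z\|_2)$, closedness of $\mathcal{C}(W_0,z)$ to place the limit in that set, solving $W_0z+a\|z\|_2^2=u^\star$, and subtracting the TACO endpoint. Your explicit treatment of the $b_t=0$ case and of the sign ambiguity in the reduced SVD fills in details the paper leaves implicit.
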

\vspace{-6mm}
\begin{proof}
The gradient can be represented via $G_t=g_tz^\top$, and $B_t=b_tz^\top$ by induction from $B_{-1}=0$. 

If $b_t\neq 0$, then
$$ b_tz^\top=(b_t/\|b_t\|_2)(\|b_t\|_2\|z\|_2)(z/\|z\|_2)^\top $$
is a singular value decomposition, so $\mathcal{P}(B_t)=b_tz^\top/(\|b_t\|_2\|z\|_2)$.

Hence, every update has the form $az^\top$, so the iterates and their limit lie in the closed set $\mathcal{C}(W_0,z)$.

For $W=W_0+az^\top$ we have $Wz=W_0z+a\|z\|_2^2$, and $Wz=u^\star$ forces $a=r/\|z\|_2^2$, which gives the endpoint. 

Subtracting the endpoint of \Cref{thm:adam-taco} gives the displayed difference, and $z^\top z/\|z\|_2^2=\operatorname{sign}(z)^\top z/\|z\|_1=1$ gives $(W_{\rm Muon}^\star-W_{\rm TACO}^\star)z=0$.
\end{proof}
\vspace{-4mm}
The common target fixes the output correction $r=u^\star-W_0z$, but the continuation set determines how this correction is represented in the weight space. We illustrate this distinction next through least squares, connecting our formulas to the optimizer mismatch analysis of \citet{qu2026can}.

\begin{remark}\label{rmk:layerwise}{(Extension to deep neural networks)}
\Cref{thm:adam-taco} extends directly to deep networks under sequential layerwise fine-tuning. Consider
$$
h_0=x,\qquad
h_\ell=\sigma_\ell(W^{(\ell)} h_{\ell-1}),
\qquad \ell=1,\ldots,L,
$$
and optimize one $W^{(\ell)}$ at a time while keeping all other layers fixed. During each stage, $h_{\ell-1}$ is fixed, so the objective depends on $W^{(\ell)}$ only through $W^{(\ell)} h_{\ell-1}$ and therefore satisfies \Cref{ass:coherent} with $z=h_{\ell-1}$. \Cref{thm:adam-taco} then implies that TACO and the Adam proxy reach the same endpoint for that layer. Since the resulting networks are identical, they produce the same input to the next layer, and repeating this argument over $\ell=1,\ldots,L$ yields identical final networks.
\end{remark}

Theorems~\ref{thm:adam-taco} and \ref{thm:muon-sep} prove the general case. Below, we illustrate Theorems~\ref{thm:adam-taco} and~\ref{thm:muon-sep} through a specific example.

\smartparagraph{Least-squares example.}
 Consider the least-squares objective function
$$
F_{\mathrm{LS}}(W)=\frac12\|Wz-u^\star\|_2^2,
$$
where $W\in\mathbb{R}^{m\times n}$, $z\in\mathbb{R}^n$ is nonzero, and $u^\star\in\mathbb{R}^m$. This satisfies Assumption~\ref{ass:coherent} with $\phi(u)=\frac12\|u-u^\star\|_2^2$. Let $r=u^\star-W_0z$. Proposition~D.4 of \citet{qu2026can} gives the solutions
$$
W_{\rm SignGD}^\star=W_0+\frac{r\operatorname{sign}(z)^\top}{\|z\|_1}, \qquad 
W_{\rm Muon}^\star
=W_0+\frac{rz^\top}{\|z\|_2^2}
$$
for SignGD and momentum-free Muon with exact orthogonalization from arbitrary initialization $W_0$. Their convergence result uses positive step sizes satisfying $\eta_t\to0$ and $\sum_{t=0}^{\infty}\eta_t=\infty$.

Under Assumption~\ref{ass:attained}, Theorem~\ref{thm:adam-taco} gives
$$
W_{\rm TACO}^\star=W_{\rm Adam}^\star=W_{\rm SignGD}^\star,
$$
while Theorem~\ref{thm:muon-sep} gives the same Muon endpoint for the momentum-based proxy. Thus, TACO and the Adam proxy select the same limiting weights as SignGD in this setting.

These endpoints also characterize different minimum-norm corrections, where TACO and the Adam proxy inherit the minimum-max-norm correction identified for SignGD, whereas Muon selects a minimum-spectral-norm correction. For nonzero initialization, these statements concern the displacement $W-W_0$, rather than the norm of the final weights themselves.

The distinction between the continuation sets comes from their fixed right factors. In $\mathcal{C}(W_0,w)$, the coefficient $a\in\mathbb{R}^m$ varies, while $w\in\mathbb{R}^n$ is fixed. Every row of $W-W_0=aw^\top$ is therefore a scalar multiple of $w^\top$. Changing $a$ changes these row coefficients but cannot change their direction. Both $\mathcal{C}(W_0,z)$ and $\mathcal{C}(W_0,\operatorname{sign}(z))$ have dimension $m$, but they coincide only when $z=c\operatorname{sign}(z)$ for some $c>0$. A common nonzero correction
$$
az^\top=b\operatorname{sign}(z)^\top
$$
has a nonzero row satisfying $a_i z=b_i\operatorname{sign}(z)$, which forces this proportionality. Otherwise,
$$
\mathcal{C}(W_0,z)\cap\mathcal{C}(W_0,\operatorname{sign}(z))
=\{W_0\},
$$
and the two endpoints differ whenever $r\neq0$.

The two factors are not interchangeable. In particular, $z\operatorname{sign}(z)^\top$ and $\operatorname{sign}(z)z^\top$ are both $n\times n$ matrices, so they are not admissible $m\times n$ corrections when $m\neq n$. Even when $m=n$, these products are generally different matrices. 

For example, let
$$
W_0=
\begin{pmatrix}
1 & 0 & 0\\
0 & 1 & 0
\end{pmatrix},
\quad
z=
\begin{pmatrix}
1\\
2\\
-1
\end{pmatrix},
\quad
u^\star=
\begin{pmatrix}
3\\
0
\end{pmatrix}.
$$
Then $W_0z=(1,2)^\top$, $r=(2,-2)^\top$, $\|z\|_1=4$, and $\|z\|_2^2=6$. Substituting into the endpoint formulas gives
$$
\begin{aligned}
W_{\rm TACO}^\star=W_{\rm Adam}^\star
&=\frac12
\begin{pmatrix}
3 & 1 & -1\\
-1 & 1 & 1
\end{pmatrix},\\
W_{\rm Muon}^\star
&=\frac13
\begin{pmatrix}
4 & 2 & -1\\
-1 & 1 & 1
\end{pmatrix}.
\end{aligned}
$$
Direct multiplication verifies
$
W_{\rm TACO}^\star z
=W_{\rm Adam}^\star z
=W_{\rm Muon}^\star z
=u^\star
$.

Hence, both endpoints achieve zero least-squares loss. Their corrections exhibit their implicit biases:
$$
\|W_{\rm TACO}^\star-W_0\|_{\max}
=\frac12
<
\frac23
=\|W_{\rm Muon}^\star-W_0\|_{\max},
$$
whereas
$$
\|W_{\rm Muon}^\star-W_0\|_2
=\frac{2}{\sqrt{3}}
<
\frac{\sqrt{6}}{2}
=\|W_{\rm TACO}^\star-W_0\|_2.
$$

More generally, $W_0z$ is the initial output on the fixed input $z$, so $r=u^\star-W_0z$ is precisely the output correction required by fine-tuning. The dependence on $z$ records which input is being fitted; changing $z$ changes the optimization problem. For $n>1$, the target output does not determine a unique parameter matrix; the complete solution family is
$$
\{W:Wz=u^\star\}
=
W_{\rm Muon}^\star+
\{N\in\mathbb{R}^{m\times n}:Nz=0\}.
$$
Each continuation set contains exactly one matrix producing $u^\star$, so the update geometry selects one member of this solution family. 

We next turn from the limiting weights to the width dependence of the update. For $\mu$P scaling, the relevant quantity is $\|D\|_2$, which is determined by how concentrated TACO's column-wise winners are across rows. 
Assumption~\ref{assum:winner_balance} controls this concentration as \(m\) and \(n\) grow.
\setcounter{theorem}{3}
\begin{theorem}{(TACO $\mu$P-scaling)}
Let \Cref{assum:winner_balance} hold. Then the TACO direction $D=\frac{m}{n}\mathcal{S}(G)$ satisfies $\|D\|_2=\frac{m}{n}\sqrt{\max_i c_i},$ for every $G$. Hence $D$ satisfies the standard $\mu$P spectral scaling
$
\|D\|_2=\Theta\left(\sqrt{\frac{m}{n}}\right).
$
\end{theorem}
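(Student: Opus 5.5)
The plan is to compute $\|\mathcal{S}(G)\|_2$ exactly from the Gram matrix $\mathcal{S}(G)\mathcal{S}(G)^\top$, using only the column-wise one-sparse structure of $\mathcal{S}$ in \eqref{TACO:eqn:TACO_operator}. The factor $\frac{m}{n}$ is then pulled out by homogeneity of the spectral norm, and \Cref{assum:winner_balance} is substituted at the end.

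Write $S=\mathcal{S}(G)$. By definition, column $j$ of $S$ is $\sign(G_{i_j j})\,e_{i_j}$, a single signed standard basis vector (or zero). I will form $SS^\top=\sum_{j=1}^n s_j s_j^\top$, where $s_j$ is the $j$-th column. Each summand is $\sign(G_{i_jj})^2\, e_{i_j}e_{i_j}^\top$, which is a diagonal matrix with a single entry in position $(i_j,i_j)$ equal to $1$ (or $0$). Summing over $j$ gives
\begin{equation*}
SS^\top=\operatorname{diag}(c_1,\ldots,c_m),
\end{equation*}
where $c_i$ counts the columns whose winner lies in row $i$. The key observation is that the off-diagonal entries $(SS^\top)_{ii'}=\sum_j S_{ij}S_{i'j}$ vanish for $i\neq i'$, because no column has two nonzero entries. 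Hence $\|S\|_2^2=\lambda_{\max}(SS^\top)=\max_i c_i$, and therefore
\begin{equation*}
\|D\|_2=\tfrac{m}{n}\|S\|_2=\tfrac{m}{n}\sqrt{\max_i c_i}.
\end{equation*}
This identity holds for every $G$ with no assumption needed.

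For the scaling claim, I will insert $\max_i c_i=\Theta(n/m)$ from \Cref{assum:winner_balance}. This gives
\begin{equation*}
\|D\|_2=\tfrac{m}{n}\,\Theta\big(\sqrt{n/m}\big)=\Theta\big(\sqrt{m/n}\big),
\end{equation*}
which is the standard $\mu$P spectral scaling $\sqrt{\text{fan-out}/\text{fan-in}}$ for an $m\times n$ map.

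The only delicate point I expect is bookkeeping for degenerate columns. If $g_j=0$, then $\sign(G_{i_jj})=0$ and the column of $S$ is zero. In that case $c_i$ must be read as counting only columns whose winner entry is nonzero; otherwise the exact identity becomes an upper bound rather than an equality. I will state this convention explicitly, noting that it matches the tie-breaking convention fixed in \Cref{sec:geometry}. Under it, the diagonal Gram computation goes through verbatim.
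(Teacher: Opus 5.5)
Your proposal matches the paper's proof. Both compute $\mathcal{S}(G)\mathcal{S}(G)^\top=\operatorname{diag}(c_1,\ldots,c_m)$, read off $\|D\|_2=\frac{m}{n}\sqrt{\max_i c_i}$, and then invoke \Cref{assum:winner_balance}; the paper phrases that last step as $\|D\|_2/\sqrt{m/n}=\sqrt{\tfrac{m}{n}\max_i c_i}$. Your caveat that $c_i$ must count only columns with a nonzero winner is a genuine refinement: the paper's proof tacitly assumes every column of $\mathcal{S}(G)$ has exactly one nonzero entry, which fails when $g_j=0$.
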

\vspace{-4mm}
\begin{proof}
Since each column of $\mathcal{S}(G)$ contains one nonzero entry at row $i_j$, we have
$$
\mathcal{S}(G)\mathcal{S}(G)^\top=\operatorname{diag}(c_1,\ldots,c_m).
$$
Hence,
$$
\|\mathcal{S}(G)\|_2=\sqrt{\max_i c_i},
$$
and the TACO direction $D=\frac{m}{n}\mathcal{S}(G)$ satisfies
$$
\|D\|_2=\frac{m}{n}\sqrt{\max_i c_i}.
$$
This identity holds for every $G$ and uses no assumptions. Dividing by $\sqrt{m/n}$ gives
$$
\frac{\|D\|_2}{\sqrt{m/n}}=\sqrt{\frac{m}{n}\max_i c_i},
$$
so the left side is $\Theta(1)$ exactly when $\max_i c_i=\Theta(n/m)$, which proves both directions.
\end{proof}
\vspace{-2mm}
The proof reduces a spectral-norm question to a counting question, since $\|\mathcal{S}(G)\|_2=\sqrt{\max_i c_i}$ exactly. The degenerate case makes the condition concrete. If a single row wins every column, then $\max_i c_i=n$ and $\|D\|_2=m/\sqrt{n}$, which exceeds the $\mu$P target by a factor of $\sqrt{m}$.

\subsection{Gradient History and Heavy Hitter Analysis}

Finally, we connect the update rule to the sparse-history design in
\Cref{sec:practical implementation}. The dense gradient history $\tilde M_t$ and
sparse gradient history $M_t$ need not agree entrywise. Rather than requiring
their selected coordinates to agree directly, we give a sufficient condition
based on the approximation error and the separation between the largest and
second-largest entries of the dense gradient history.

Let $\tilde M_t=\mu\tilde M_{t-1}+(1-\mu)G_t$ and
$M_t=\mathcal{H}_k(\mu M_{t-1}+(1-\mu)G_t)$ denote the dense and sparse gradient history. For every column $j$ and iteration $t$, define the sparse-history approximation error as
$\eps_{t,j}:=\|M_{t,:,j}-\tilde M_{t,:,j}\|_\infty$.

\begin{theorem}({Winner-margin preservation of TACO})
\label{thm:taco-exact-history}
 Let
$i^\star_{t,j}=\arg\max_i|\tilde M_{t,ij}|$ and define the dense-history
winner margin as
$
\gamma_{t,j}
=
|\tilde M_{t,i^\star_{t,j},j}|
-
\max_{i\neq i^\star_{t,j}}|\tilde M_{t,ij}|.
$
If $\gamma_{t,j}>2\eps_{t,j}$ for every column $j$ and iteration $t$,
then $\mathcal{S}(M_t)=\mathcal{S}(\tilde M_t)$ for all $t\geq0$.
\end{theorem}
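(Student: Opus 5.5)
The argument is entirely column-wise and deterministic. We fix an iteration $t$ and a column $j$ and show that the sparse history selects the same row with the same sign as the dense history. Because $\mathcal{S}$ acts column by column, this gives $\mathcal{S}(M_t)=\mathcal{S}(\tilde M_t)$ at that $t$. The hypothesis is assumed for every $t$, so no induction over $t$ is required: the coupling between $M_t$ and $\tilde M_t$ across iterations is already absorbed into the definition of $\eps_{t,j}$. We only need the pointwise statement at each $t$. We abbreviate $x=\tilde M_{t,:,j}$, $y=M_{t,:,j}$, $\eps=\eps_{t,j}$, $\gamma=\gamma_{t,j}$ and $i^\star=i^\star_{t,j}$.

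\textbf{Step 1 (unique dense winner).} Since $\gamma>2\eps\ge 0$, the maximizer $i^\star$ of $|x_i|$ is strict, so the tie-breaking rule plays no role on the dense side. Also $|x_{i^\star}|\ge\gamma>0$, so $\operatorname{sign}(x_{i^\star})\neq 0$.

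\textbf{Step 2 (sparse winner is the same row).} By the definition of $\eps$, $|y_i-x_i|\le\eps$ for every $i$, and the reverse triangle inequality gives
\[
|y_{i^\star}|\ge|x_{i^\star}|-\eps .
\]
For every $i\neq i^\star$,
\[
|y_i|\le|x_i|+\eps\le|x_{i^\star}|-\gamma+\eps<|x_{i^\star}|-\eps\le|y_{i^\star}|,
\]
where the strict inequality uses $\gamma>2\eps$. Hence $i^\star$ is the unique maximizer of $|y_i|$. This holds whether or not $i^\star$ was retained by $\mathcal{H}_k$: a dropped entry has $y_{i^\star}=0$, and the bound then gives $|x_{i^\star}|\le\eps$, which contradicts $|x_{i^\star}|\ge\gamma>2\eps$. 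So the winner is in fact retained.

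\textbf{Step 3 (sign agreement).} We have $|y_{i^\star}-x_{i^\star}|\le\eps<|x_{i^\star}|$, so $y_{i^\star}$ cannot vanish or change sign, and $\operatorname{sign}(y_{i^\star})=\operatorname{sign}(x_{i^\star})$. Combining the three steps, column $j$ of $\mathcal{S}(M_t)$ and of $\mathcal{S}(\tilde M_t)$ is $\operatorname{sign}(x_{i^\star})e_{i^\star}$ in both cases. This holds for every $j$ and $t$, which proves the theorem.

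The only real subtlety is ensuring strictness in Step 2, so that tie-breaking cannot produce a different selection. Strictness comes from the inequality $\gamma>2\eps$ being strict. A related edge case is a dense column that is identically zero: then $\gamma=0$, which violates the hypothesis, so this case is excluded automatically.
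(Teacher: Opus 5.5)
Your proposal is correct and follows the paper's proof essentially step for step. It uses the same two triangle-inequality bounds $|M_{t,i^\star j}|\ge|\tilde M_{t,i^\star j}|-\eps_{t,j}$ and $|M_{t,ij}|\le|\tilde M_{t,ij}|+\eps_{t,j}$, the same strict separation coming from $\gamma_{t,j}>2\eps_{t,j}$, and the same sign argument via $|M_{t,i^\star j}-\tilde M_{t,i^\star j}|\le\eps_{t,j}<|\tilde M_{t,i^\star j}|$; your extra remarks (tie-breaking is irrelevant by strictness, the winner is necessarily retained by $\mathcal{H}_k$) are correct and make explicit what the paper leaves implicit.
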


\begin{proof}
Fix a column $j$ and iteration $t$, and let
$i^\star=i^\star_{t,j}$. Then,
$$
|M_{t,i^\star j}|
\geq
|\tilde M_{t,i^\star j}|-\eps_{t,j}.
$$
For any $i\neq i^\star$,
$$
|M_{t,ij}|
\leq
|\tilde M_{t,ij}|+\eps_{t,j}.
$$
Since $\gamma_{t,j}>2\eps_{t,j}$,
$$
|\tilde M_{t,i^\star j}|-\eps_{t,j}
>
\max_{i\neq i^\star}|\tilde M_{t,ij}|+\eps_{t,j}.
$$
Therefore,
$$
|M_{t,i^\star j}|>|M_{t,ij}|
$$
for every $i\neq i^\star$, so the dense and sparse gradient histories select the same winning row.

Moreover, $\gamma_{t,j}>2\eps_{t,j}$ implies
$|\tilde M_{t,i^\star j}|>\eps_{t,j}$. Hence,
$$
|M_{t,i^\star j}-\tilde M_{t,i^\star j}|
\leq\eps_{t,j}
<
|\tilde M_{t,i^\star j}|,
$$
so the selected entries also have the same sign. By the definition of
$\mathcal{S}$ in equation~(4),
$$
\mathcal{S}(M_t)_{:,j}
=
\mathcal{S}(\tilde M_t)_{:,j}.
$$
Since this holds for every column $j$, we obtain
$$
\mathcal{S}(M_t)=\mathcal{S}(\tilde M_t)
$$
for all $t\geq0$, which proves the result.
\end{proof}

The result shows that exact reconstruction of the dense gradient history is not required to preserve the TACO update. It is sufficient for the approximation error to remain smaller than half of the dense-history winner margin in every column. Thus, when the dominant coordinate is sufficiently separated from the remaining coordinates, sparse history compression can perturb the retained values while still producing exactly the same TACO direction as the dense gradient history. This provides a direct theoretical connection between TACO's winner-based update rule and the memory-efficient sparse-history design in \Cref{sec:practical implementation}.

\begin{table*}[t!]
\centering
\footnotesize

\begin{tabular}{@{}l l@{}}
\toprule
\textbf{Configuration} & \textbf{Setting} \\
\midrule

Fine-tuning
& Full-parameter \\

Model precision
& \texttt{BF16} \\

Maximum sequence length
& 1,024 \\

Gradient accumulation
& 1 \\
Learning rate & \texttt{3e-5}\\

Learning rate schedule
& Linear \\

Warmup
& 50 steps \\

Gradient checkpointing
& Enabled \\

Data seed
& 42 \\

Default optimizer step budget
& 2,000 optimizer steps \\

Default batch size
& 4 \\

TACO update
& Column-wise Top-1 \\

Gradient history EMA decay factor
& $\mu=0.95$ \\

Heavy hitter threshold
& $k=16$ entries per column \\

Gradient history value format
& \texttt{FP8 E4M3} \\

Gradient history index format
& \texttt{int32} \\

Auxiliary AdamW
& $\beta=(0.9,0.999)$, $\eps=10^{-8}$, weight decay $=0.01$ \\

\bottomrule
\end{tabular}

\caption{\small \textbf{Shared experimental configuration.}
All experiments use this setting unless otherwise specified.
}
\label{tab:shared-experiment-config}
\vspace{-4mm}
\end{table*}

\begin{table*}[t!]
\centering
\footnotesize
\setlength{\tabcolsep}{4.5pt}

\begin{tabularx}{\textwidth}{
@{}
>{\raggedright\arraybackslash}p{2.7cm}
>{\raggedright\arraybackslash}p{3.2cm}
>{\raggedright\arraybackslash}p{1.6cm}
>{\raggedright\arraybackslash}X
@{}
}
\toprule
\textbf{Experiment}
&
\textbf{Model and task}
&
\textbf{Optimizer}
&
\textbf{Experiment specific hyperparameters}
\\
\midrule


\multirow[c]{7}{3cm}{
\makecell[l]{
\textbf{E1}\\
Optimizer comparison\\
Fix model, task
}}
&
\multirow[c]{7}{4cm}{
\makecell[l]{
\textbf{Model:} OPT-13B\\
\textbf{Tasks:} SST-2, RTE,\\
BoolQ, COPA, CB,\\
MultiRC, SQuAD, \\
DROP
}}
&
TACO
&
--
\\
\cmidrule(lr){3-4}

&
&
AdamW8bit
&
LR~=~\texttt{3e-6}; 8-bit optimizer states;
\texttt{FP32} embedding; weight decay~$=0.01$
\\
\cmidrule(lr){3-4}

&
&
Adafactor
&
LR~=~\texttt{1e-4}; $\beta_2$ decay~$=-0.8$;
weight decay~$=0$
\\
\cmidrule(lr){3-4}

&
&
APOLLO-Mini
&
LR~=~\texttt{3e-6}; rank~$=1$;
projection gap~$=200$; tensor-wise scale~$=128$
\\
\cmidrule(lr){3-4}

&
&
FlashAdamW
&
LR~=~\texttt{1e-6}; \texttt{BF16} weights;
8-bit weight correction; 8-bit moments;
weight decay~$=0.01$
\\
\cmidrule(lr){3-4}

&
&
GaLore
&
LR~=~\texttt{3e-5}; rank~$=128$;
projection gap~$=200$; scale~$=0.25$;
weight decay~$=0$
\\
\cmidrule(lr){3-4}

&
&
Muon
&
LR~=~\texttt{3e-5}; momentum~$=0.95$;
Nesterov; Newton-Schulz iteration~=~$5$;
auxiliary LR~=~\texttt{3e-4}
\\

\midrule


\makecell[l]{
\textbf{E2}\\
Model scaling\\
Fix optimizer, task
}
&
\makecell[l]{
\textbf{Models:}\\
OPT-$\{$1.3B, 6.7B,\\
13B, 30B$\}$\\
Qwen3-$\{\text{8B, 14B, 32B}\}$\\
\textbf{Tasks:} \\
SST-2, RTE, BoolQ
}
&
TACO
&
--
\\

\midrule


\makecell[l]{
\textbf{E3.1}\\
Optimizer step\\
budget ablation
}
&
&
&
Steps $\in\{\text{2K, 4K, 8K, 16K}\}$
\\

\makecell[l]{
\textbf{E3.2}\\
Gradient history EMA\\
decay factor ablation
}
&
\makecell[l]{
\textbf{Model:} OPT-1.3B\\
\textbf{Tasks:}\\
BoolQ, RTE, SQuAD\\
\,
}
&
&
\makecell[l]{
$\mu\in\{0,0.95\}$; \\
additionally evaluated on \\
SST-2, COPA, CB, MultiRC, and DROP
}
\\

\makecell[l]{
\textbf{E3.3}\\
Heavy hitter \\
threshold $k$ ablation
}
&
&
&
$k\in\{\text{dense},64, 32, 16, 8, 4, 2, 1\}$
\\

\makecell[l]{
\textbf{E3.4}\\
Update sparsity\\
Top-$k$ ablation
}
&
&
TACO
&
\makecell[l]{
Top-1, Top-2, Top-4, Top-8, \\
Top-16, Top-32;\\
heavy hitter threshold $k$~=~32
}
\\

\cmidrule(lr){1-2}

\makecell[l]{
\textbf{E3.5}\\
Batch size ablation
}
&
\makecell[l]{
\textbf{Models:}\\
Qwen3-$\{\text{14B, 32B}\}$\\
\textbf{Tasks:} SST-2, RTE
}
&
&
BS $\in\{2,4,8,16,32,64,128\}$
\\

\cmidrule(lr){1-2}

\makecell[l]{
\textbf{E3.6}\\
$\mu$P learning rate\\
transfer study
}
&
\makecell[l]{
\textbf{Models:}\\
OPT-$\{$1.3B, 6.7B, \\
13B, 30B$\}$\\
Qwen3-$\{\text{8B, 14B, 32B}\}$\\
\textbf{Tasks:} SST-2, RTE
}
&
&
Learning rate sweep
\\

\midrule


\makecell[l]{
\textbf{E4}\\
Architecture transfer\\
Fix optimizer, task
}
&
\makecell[l]{
\textbf{Models:} \\
Pythia-12B, \\
Llama-3.1-8B, \\
Qwen3-32B, \\
Mistral-24B\\
\textbf{Tasks:} \\
SST-2, RTE, BoolQ
}
&
TACO
&
--
\\

\bottomrule
\end{tabularx}

\caption{\small
\textbf{Experiment configurations for E1-E4.}
E1 compares optimizers with the model and tasks fixed.
E2 studies TACO across model scales.
E3 contains the ablation studies.
E4 evaluates transfer across model architectures.
Settings not explicitly listed follow
\Cref{tab:shared-experiment-config}.
Note that TACO uses uniform hyperparameters in \Cref{tab:shared-experiment-config}, except the ablation study category, E3, where only explicitly listed hyperparameters are varied.
}
\label{tab:experiment-config}
\vspace{-4mm}
\end{table*}

\section{Experiment Setup and Additional Experiment Results}
\label{sec:experiment_setup}
\label{sec:more_experiment_results}

\smartparagraph{Model and datasets.}
For LLM experiments, we evaluate the OPT family~\citep{zhang2022opt}, including OPT-1.3B, OPT-6.7B, OPT-13B, and OPT-30B; the Qwen3 family~\citep{qwen3}, including Qwen3-8B, Qwen3-14B, and Qwen3-32B; as well as Pythia-12B~\citep{biderman2023pythia}, Llama-3.1-8B~\citep{llama3herd2024}, and Mistral-Small-24B-Base-2501~\citep{MISTRAL-CITATION}. We evaluate on SST-2~\citep{socher2013recursive_sst-2}, RTE~\citep{dagan2005pascal_rte1,bar2006second,giampiccolo2007third_rte3,bentivogli2009fifth_rte4}, BoolQ~\citep{clark-etal-2019-boolq}, CB~\citep{de2019commitmentbank}, COPA~\citep{roemmele2011choice}, and MultiRC~\citep{khashabi2018looking} from SuperGLUE~\citep{wang2019superglue}, together with the question answering datasets SQuAD~\citep{rajpurkar-etal-2016-squad} and DROP~\citep{dua-etal-2019-drop}. Our experimental code is adapted from MeZO~\citep{malladi2023finetuning}, and we adopt the same task prompts as MeZO.

\smartparagraph{Implementation details.}
TACO treats model parameters as matrices and is applied to matrix-valued parameters. 
For ordinary attention and FFN linear weight matrices, TACO uses the geometry-derived \(m/n\) scaling in \eqref{eq:vanilla TACO update}. Token embedding matrices are treated separately and use a scaling factor of 1, reflecting their one-hot lookup structure rather than an ordinary dense linear transformation.
Following the common practice of matrix-based optimizers such as Muon~\citep{jordan2024muon}, scalar and vector parameters are optimized with an auxiliary AdamW optimizer. 
For models based on the GPT-NeoX architecture, such as Pythia, the query, key, and value projections are stored in a single fused QKV parameter. We respect this logical matrix structure by treating the fused parameter as three separate Q, K, and V projections for TACO history compression and coordinate selection, and applying TACO independently to each projection before writing the updates back to the fused parameter. This prevents the Q, K, and V projections from competing for a single selected coordinate in each column.

\smartparagraph{Hyperparameters.}
We group our experiments into four categories, E1-E4; see Table~\ref{tab:experiment-config}. 
Unless otherwise specified, all experiments use the shared hyperparameters summarized in Table~\ref{tab:shared-experiment-config}.
Experiment-specific hyperparameters and settings are summarized in Table~\ref{tab:experiment-config}.

E1 compares TACO against six baseline optimizers. For each optimizer, we perform a common learning rate sweep spanning \(10^{-6}\) to \(10^{-2}\) on SST-2, RTE, and BoolQ using the same training seed, batch size 1, and a 500-step budget, and select the final learning rate based on validation performance. The selected learning rate is then used in E1 experiments.



\smartparagraph{Hardware.}
All reported experiments are conducted on NVIDIA~H100~80\,GB~HBM3~GPUs of the same model. Each device reports approximately 85.5~GB of total memory when expressed in decimal units ($10^9$ bytes). Peak GPU memory is measured as the maximum CUDA memory allocated during training and is reported in decimal GB. Runtime and throughput comparisons use the same GPU model and hardware configuration.


\subsection{Optimizer comparison}
We provide the complete OPT-13B comparison across all eight downstream tasks in Table~\ref{tab:e1-performance}. The corresponding SST-2 efficiency results, including persistent optimizer state, peak GPU memory, runtime, throughput, and validation behavior, are reported in Table~\ref{tab:opt13b_sst2_efficiency}. 
Each configuration is launched with five seeds; reported values average over successful runs. Failed runs are marked by \(n/5\) or OOM.
Although Adafactor has the smallest persistent optimizer state, its transient training memory is substantially higher, resulting in considerably larger peak GPU memory than TACO. 
FlashAdamW benefits from hardware-optimized fused kernels and gradient release, so its throughput also reflects systems-level optimization \citep{gonzalezblalock2026flashoptim}.
TACO remains the most memory-efficient method in terms of peak GPU memory while maintaining competitive downstream performance and practical training throughput.
The main performance and memory observations are discussed in \Cref{sec:experiment}.

\begin{table}[h]
\centering
\footnotesize

\begin{tabular}{@{}l c c c c c c c c@{}}
\toprule
&
\multicolumn{3}{c}{\textbf{Classification}} &
\multicolumn{3}{c}{\textbf{Multiple Choice}} &
\multicolumn{2}{c}{\textbf{Generation}} \\
\cmidrule(lr){2-4}
\cmidrule(lr){5-7}
\cmidrule(lr){8-9}

\textbf{Method}
& \textbf{SST-2}
& \textbf{RTE}
& \textbf{BoolQ}
& \textbf{COPA}
& \textbf{CB}
& \textbf{MultiRC}
& \textbf{SQuAD}
& \textbf{DROP} \\
\midrule

TACO
&94.2
&80.8
&71.5
&89.4
&87.1
&61.2
&82.6
&30.8\\


AdamW8bit
&95.3
&83.5
&62.1 {\tiny (2/5)}
&88.2
&71.8
&OOM
&88.1
&OOM\\

Adafactor
&95.5
&87.3
&82.0
&87.0
&90.0
&75.1
&88.6
&37.7\\

APOLLO-Mini
&93.0
&83.5
&77.8
&87.0
&80.7
&64.2
&88.8
&36.7\\

FlashAdamW
&95.5
&86.3
&64.8 {\tiny (4/5)}
&86.8
&85.7
&57.5
&88.4
&39.8\\

GaLore
&95.7
&85.7
&80.2
&91.8
&76.4
&67.3
&88.0
&34.5\\

Muon
&94.4
&77.0 {\tiny (4/5)}
&73.5 {\tiny (4/5)}
&87.2
&81.2 {\tiny (2/5)}
&OOM
&77.8
&OOM\\




\bottomrule
\end{tabular}
\caption{\small
\textbf{OPT-13B performance comparison across downstream tasks.} SST-2, RTE, BoolQ, COPA, and CB report accuracy; MultiRC reports F1A; SQuAD and DROP report F1. For entries marked {\scriptsize $(n/5)$}, only $n$ of the five runs complete successfully and the remaining runs fail due to OOM.
}
\label{tab:e1-performance}
\end{table}
\begin{table}[h!]
\centering
\footnotesize
\setlength{\tabcolsep}{3pt}

\begin{tabular}{l c c c c c c c}
\toprule
Optimizer
& Steps
& State Mem. $\downarrow$
& Peak Mem. $\downarrow$
& Runtime $\downarrow$
& Throughput $\uparrow$
& Acc. $\uparrow$
& Best Val. Step $\downarrow$\\

&
&
(GB)
&
(GB)
&
(minutes)
&
(tokens/s)
&
(\%)
&
\\
\midrule
TACO
& 2K & 0.159 & 27.54 & 14.92 & 275.8 & 94.22 & 1240 \\

Adafactor
& 2K & 0.013 & 52.59 & 11.77 & 349.7 & 95.50 & 840 \\

AdamW8bit
& 2K & 27.708 & 80.60 & 15.74 & 261.3 & 95.25 & 1380 \\

APOLLO-Mini
& 2K & 1.095 & 53.67 & 8.95 & 460.1 & 92.98 & 940 \\

FlashAdamW
& 2K & 40.167 & 67.68 & 7.55 & 546.7 & 95.48 & 940 \\

GaLore
& 2K & 2.655 & 56.71 & 132.34 & 31.1 & 95.71 & 1120 \\

Muon
& 2K & 26.248 & 80.01 & 101.98 & 40.3 & 94.38 & 680 \\
\bottomrule
\end{tabular}
\caption{\small
\textbf{OPT-13B fine-tuning efficiency on SST-2.} Best Val. Step reports the mean checkpoint selected by validation performance.
}
\label{tab:opt13b_sst2_efficiency}
\end{table}


\subsection{Model Scaling}
\Cref{tab:e2-scaling} provides the complete scaling results for the OPT and Qwen3 model families on SST-2, RTE, and BoolQ. These results complement the scaling discussion in \Cref{sec:experiment} and \Cref{fig:model_scaling} by reporting downstream performance and peak GPU memory for each model size. 
Results are averaged over three seeds.

\begin{table}[h!]
\centering
\footnotesize

\begin{tabular}{@{}l c cc cc cc@{}}
\toprule
\textbf{Family}
& \textbf{Model Size}
& \multicolumn{2}{c}{\textbf{SST-2}}
& \multicolumn{2}{c}{\textbf{RTE}}
& \multicolumn{2}{c}{\textbf{BoolQ}} \\
\cmidrule(lr){3-4}
\cmidrule(lr){5-6}
\cmidrule(lr){7-8}

&
& \textbf{Acc.}
& \textbf{Mem. (GB)}
& \textbf{Acc.}
& \textbf{Mem. (GB)}
& \textbf{Acc.}
& \textbf{Mem. (GB)} \\
\midrule

OPT
& 1.3B
& 92.9 & 3.4
& 69.6 & 4.0
& 63.7 & 5.5 \\

& 6.7B
& 94.4 & 14.8
& 74.0 & 15.5
& 68.1 & 17.3 \\

& 13B
& 94.7 & 27.6
& 78.5 & 28.6
& 72.3 & 31.0 \\

& 30B
& 93.3 & 62.5
& 79.5 & 64.4
& 67.1 & 68.2 \\

\midrule

Qwen3
& 8B
& 93.9 & 19.4
& 88.8 & 20.6
& 86.3 & 24.5 \\

& 14B
& 94.4 & 33.3
& 89.9 & 34.8
& 87.5 & 38.2 \\

& 32B
& 94.1 & 69.6
& 90.3 & 71.6
& 87.2 & 75.9 \\

\bottomrule
\end{tabular}
\caption{\small
Scaling behavior of TACO across the OPT and Qwen3 model families.
}
\label{tab:e2-scaling}
\end{table}

\subsection{Ablation studies}

\smartparagraph{E3.1: Optimizer step budget ablation.}
We vary the optimizer step budget from 2K to 16K on OPT-1.3B; results are reported in \Cref{tab:e3-steps}. TACO continues to improve beyond the default 2K-step budget across all three tasks. Most of the improvement on RTE and SQuAD is obtained by 4K-8K steps, while BoolQ continues to improve through 16K steps. Relative to the 2K-step setting, the 16K-step budget improves BoolQ and RTE by 6.5 and 4.3 percentage points, respectively, and SQuAD by 2.1 F1 points.
These results indicate that the default 2K-step budget does not represent TACO's performance ceiling. 
Results are averaged over three seeds.




\begin{table}[h!]
\centering
\footnotesize

\begin{tabular}{@{}l c c c c c@{}}
\toprule
\textbf{Task}
& $\boldsymbol{T=2\mathrm{K}}$
& $\boldsymbol{T=4\mathrm{K}}$
& $\boldsymbol{T=8\mathrm{K}}$
& $\boldsymbol{T=16\mathrm{K}}$
& \textbf{Change} \\
\midrule
BoolQ  & 68.6 & 72.6 & 73.3 & \textbf{75.1} & $+6.5$ pp \\
RTE    & 72.6 & 76.5 & 76.7 & \textbf{76.9} & $+4.3$ pp \\
SQuAD  & 78.6 & 80.0 & \textbf{80.8} & 80.7 & $+2.1$ F1 \\
\bottomrule
\end{tabular}

\caption{\small TACO optimizer step budget ablation on OPT-1.3B with $T\in\{2\mathrm{K},4\mathrm{K},8\mathrm{K},16\mathrm{K}\}$. The change is measured from $T=2\mathrm{K}$ to $T=16\mathrm{K}$.}
\label{tab:e3-steps}
\end{table}




\smartparagraph{E3.2: Gradient history EMA decay factor ablation.}
We examine the effect of the gradient history EMA decay factor by comparing the default $\mu=0.95$ with the history-free setting $\mu=0$ across all eight downstream tasks; results are reported in \Cref{tab:e3-history}. The EMA gradient history improves performance on all six classification and multiple-choice tasks, with the largest gains on MultiRC ($+8.9$ F1A) and CB ($+7.1$ percentage points). The two generation tasks, SQuAD and DROP, favor $\mu=0$ by 2.6 and 0.4 F1 points, respectively. Overall, these results support $\mu=0.95$ as a uniform practical default. Results are averaged over three seeds.















\begin{table}[h!]
\centering
\footnotesize

\begin{tabular}{@{}l c c c@{}}
\toprule
\textbf{Task}
& $\boldsymbol{\mu=0}$
& $\boldsymbol{\mu=0.95}$
& \textbf{Change} \\
\midrule
SST-2   & 93.7 & 93.9 & $+0.3$ pp \\
RTE     & 71.2 & 72.6 & $+1.3$ pp \\
BoolQ   & 66.1 & 68.6 & $+2.5$ pp \\
COPA    & 78.0 & 79.0 & $+1.0$ pp \\
CB      & 79.2 & 86.3 & $+7.1$ pp \\
MultiRC & 53.5 & 62.5 & $+8.9$ F1A \\
SQuAD   & 81.3 & 78.6 & $-2.6$ F1 \\
DROP    & 28.7 & 28.3 & $-0.4$ F1 \\
\bottomrule
\end{tabular}

\caption{
Effect of gradient history EMA decay factor $\mu$ on OPT-1.3B.
SST-2, RTE, BoolQ, COPA, and CB report accuracy; MultiRC reports F1A; SQuAD and DROP report F1.
}
\label{tab:e3-history}
\end{table}

\smartparagraph{E3.3: Heavy hitter threshold $k$ ablation.}
We compare dense gradient histories with sparse gradient history retaining $k\in\{64,32,16,8,4,2,1\}$ entries per column; results are reported in \Cref{tab:e3-history-k}. Moderately sparse gradient histories retain performance broadly comparable to dense gradient history while substantially reducing persistent optimizer state. In particular, $k=16$ achieves the strongest BoolQ and RTE performance while using only 38.3~MB of optimizer state, a $34.4\times$ reduction from the 1317.7~MB required by dense gradient history. SQuAD is slightly better with dense or $k=32$ history, while more aggressive compression at very small $k$ begins to degrade performance. We therefore use $k=16$ as the default heavy hitter threshold. Results are averaged over three seeds.






\begin{table}[h!]
\centering
\footnotesize

\begin{tabular}{@{}l c c c c c c c c@{}}
\toprule
& \textbf{Dense}
& $\boldsymbol{k=64}$
& $\boldsymbol{k=32}$
& $\boldsymbol{k=16}$
& $\boldsymbol{k=8}$
& $\boldsymbol{k=4}$
& $\boldsymbol{k=2}$
& $\boldsymbol{k=1}$ \\
\midrule

BoolQ
& 68.1
& 66.8
& 67.9
& \textbf{68.6}
& 67.5
& 65.5
& 65.8
& 66.3 \\

RTE
& 69.7
& 70.9
& 71.0
& \textbf{72.6}
& 70.8
& 70.3
& 69.3
& 69.7 \\

SQuAD
& 79.4
& 79.4
& 79.4
& 78.6
& \textbf{78.9}
& 78.6
& 77.7
& 78.4 \\

\midrule
State memory (MB)
& 1317.7
& 145.4
& 74.0
& 38.3
& 20.4
& 11.5
& 7.0
& 4.8 \\

\bottomrule
\end{tabular}%

\caption{\small TACO heavy hitter threshold $k$ ablation on OPT-1.3B. }
\label{tab:e3-history-k}
\end{table}

\smartparagraph{E3.4: Update sparsity Top-$k$ ablation.}
We relax TACO's theoretically derived column-wise Top-1 update and evaluate Top-$k$ variants while fixing the heavy hitter threshold to $k=32$; results are reported in \Cref{tab:e3-topk}. Top-1 achieves the strongest final performance across all three tasks. Top-2 remains broadly comparable, while performance begins to degrade for denser updates and deteriorates substantially at Top-8 and beyond, particularly on BoolQ and RTE. Although Top-2 and Top-4 can make faster progress during the early stage of training on some tasks, this advantage is transient and does not translate into better final performance. These results support retaining the column-wise Top-1 update used by TACO, which also corresponds exactly to the steepest-descent solution under the $1\to1$ geometry.
Results are averaged over three seeds.



\begin{table}[h!]
\centering
\footnotesize

\begin{tabular}{@{}l c c c c c c@{}}
\toprule
\textbf{Task}
& \textbf{Top-1 (TACO)}
& \textbf{Top-2}
& \textbf{Top-4}
& \textbf{Top-8}
& \textbf{Top-16}
& \textbf{Top-32} \\
\midrule

BoolQ
& \textbf{67.9}
& 66.6
& 67.8
& 61.9
& 61.5
& 61.5 \\

RTE
& \textbf{71.0}
& 70.4
& 69.3
& 53.6
& 56.1
& 47.1 \\

SQuAD
& \textbf{79.4}
& 79.0
& 76.8
& 76.0
& 73.0
& 70.7 \\

\bottomrule
\end{tabular}

\caption{\small 
TACO update Top-$k$ ablation on OPT-1.3B with heavy hitter threshold $k=32$.
}
\label{tab:e3-topk}
\end{table}





\smartparagraph{E3.5: TACO and batch sizes.}
We evaluate batch sizes from 2 to 128 on Qwen3-14B and Qwen3-32B using SST-2 and RTE; results are reported in \Cref{tab:taco_batch_size}. Larger batches increase peak GPU memory without consistently improving accuracy, so we retain batch size 4 as the default.

\begin{table*}[h!]
\centering
\footnotesize

\setlength{\tabcolsep}{4pt}

\resizebox{\textwidth}{!}{%
\begin{tabular}{@{}r cc cc cc cc@{}}
\toprule
& \multicolumn{4}{c}{\textbf{Qwen3-14B}}
& \multicolumn{4}{c}{\textbf{Qwen3-32B}} \\
\cmidrule(lr){2-5}\cmidrule(lr){6-9}

& \multicolumn{2}{c}{\textbf{SST-2}}
& \multicolumn{2}{c}{\textbf{RTE}}
& \multicolumn{2}{c}{\textbf{SST-2}}
& \multicolumn{2}{c}{\textbf{RTE}} \\
\cmidrule(lr){2-3}\cmidrule(lr){4-5}
\cmidrule(lr){6-7}\cmidrule(lr){8-9}

\textbf{Batch}
& \textbf{Acc.} & \textbf{Mem. (GB)}
& \textbf{Acc.} & \textbf{Mem. (GB)}
& \textbf{Acc.} & \textbf{Mem. (GB)}
& \textbf{Acc.} & \textbf{Mem. (GB)} \\
\midrule

2
& $94.1 \pm 0.9$ & 33.1
& $89.5 \pm 1.2$ & 33.8
& $92.3 \pm 2.6$ & 69.3
& $89.4 \pm 1.9$ & 70.3 \\

4
& $94.2 \pm 1.0$ & 33.3
& $89.2 \pm 1.3$ & 34.8
& $93.3 \pm 1.6$ & 69.6
& $\mathbf{90.1} \pm 0.8$ & 71.5 \\

8
& $93.9 \pm 0.4$ & 33.6
& $89.3 \pm 1.9$ & 36.6
& $93.7 \pm 0.5$ & 70.0
& $89.4 \pm 0.9$ & 74.0 \\

16
& $94.5 \pm 0.6$ & 34.3
& $\mathbf{89.8} \pm 0.8$ & 43.1
& $93.1 \pm 0.9$ & 71.0
& $89.3 \pm 0.6$ & 81.4 \\

32
& $93.9 \pm 1.7$ & 35.8
& $89.0 \pm 2.1$ & 56.4
& $\mathbf{94.0} \pm 0.7$ & 72.9
& \multicolumn{2}{c}{OOM} \\

64
& $\mathbf{94.8} \pm 0.9$ & 40.1
& \multicolumn{2}{c}{OOM}
& $93.5 \pm 0.8$ & 78.1
& \multicolumn{2}{c}{OOM} \\

128
& $94.1 \pm 1.0$ & 50.3
& \multicolumn{2}{c}{OOM}
& \multicolumn{2}{c}{OOM}
& \multicolumn{2}{c}{OOM} \\

\bottomrule
\end{tabular}%
}
\caption{\small TACO batch size ablation on Qwen3-14B and Qwen3-32B.
Accuracy is reported as mean $\pm$ sample standard deviation, and memory as peak GPU memory.}
\label{tab:taco_batch_size}
\end{table*}


\smartparagraph{E3.6: $\mu$P Learning Rate Transfer in TACO.} 

We sweep learning rates across Qwen3-8B/14B/32B on SST-2 and OPT-1.3B/6.7B/13B/30B on SST-2 and RTE, using 1,000 optimizer steps, batch size 4, and five training seeds. As shown in \Cref{fig:muP}, every model-dataset configuration selects \(3\times10^{-5}\) as the best learning rate, providing empirical support for learning rate transfer across model scales. This contrasts with conventional scale-dependent tuning; for example, \citet{brown2020language} use progressively smaller learning rates as GPT-3 model size increases.


\begin{figure}[h!]
    \centering
    \includegraphics[width=0.99\linewidth]{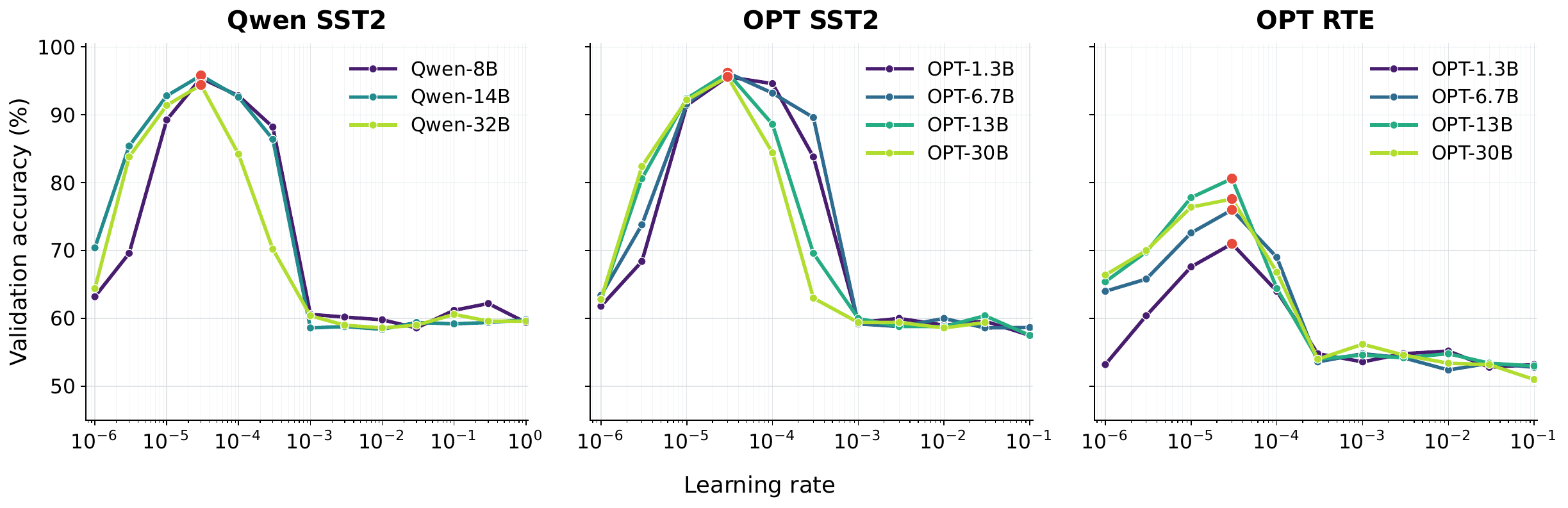}
    \caption{\small Every evaluated model and dataset configuration attains its best validation performance at \(3\times10^{-5}\), empirically supporting the \(\mu\)P scaling result in \Cref{thm:taco_mup}}
    \label{fig:muP}
\end{figure}

\subsection{Architecture Transfer}

The complete architecture transfer results for Pythia-12B, Llama-3.1-8B, Qwen3-32B, and Mistral-24B are reported and discussed in \Cref{sec:experiment}; see \Cref{fig:architecture_table_opt_rte}.

\clearpage

\end{document}